\documentclass{article}
\usepackage{iclr2025_conference,times}

\usepackage{amsmath,amsfonts,bm}

\def\eqref#1{equation~\ref{#1}}
\def\1{\bm{1}}

\DeclareMathAlphabet{\mathsfit}{\encodingdefault}{\sfdefault}{m}{sl}
\SetMathAlphabet{\mathsfit}{bold}{\encodingdefault}{\sfdefault}{bx}{n}

\newcommand{\E}{\mathbb{E}}

\newcommand{\Cov}{\mathrm{Cov}}
\usepackage{hyperref}
\usepackage{url}
\usepackage{booktabs}
\usepackage{array}
\usepackage{graphicx}
\usepackage{amsmath}
\usepackage{amssymb}
\usepackage{amsthm}
\newtheorem{assumption}{Assumption}
\newtheorem{definition}{Definition}
\newtheorem{proposition}{Proposition}

\usepackage{xcolor}
\usepackage{multirow}
\usepackage{bm}
\usepackage{algorithm}
\usepackage{algpseudocode}
\usepackage{enumitem}

\title{Low Perplexity is Repetition: A One-Dimensional Self-Conditioning Attractor in Continuous Diffusion LMs}

\author{Shuai Zhang$^{1,2}$, \, Zijie Chen$^{2}$, \, Hongliang He$^{2}$, \, Lun Du$^{3,\dagger}$, \, Zhenzhong Lan$^{2,\dagger}$ \\[3pt]
{\normalfont $^{1}$Zhejiang University \quad $^{2}$Westlake University \quad $^{3}$Ant Group} \\[3pt]
{\normalfont \texttt{zhangshuai@westlake.edu.cn} \quad \texttt{lanzhenzhong@westlake.edu.cn}} \\[3pt]
{\normalfont\small $^{\dagger}$Corresponding authors}}

\iclrfinalcopy   % arXiv preprint: show authors + drop the 'under review at ICLR 2025' banner

\begin{document}
\maketitle
\lhead{Preprint}   % neutral running header instead of 'Published/Under review at ICLR 2025'

\begin{abstract}
Continuous diffusion language models such as ELF report record-low generative
perplexity (Gen-PPL). We find a catch: these models repeat far more than human text, and
Gen-PPL rewards rather than penalizes that repetition, so its low scores overstate quality. Strip
the repetition and ELF-B's Gen-PPL rises from $19.5$ to $27.7$; the smallest model even
posts the best Gen-PPL because it repeats most. We trace the repetition to its source: a
contractive attractor along a \emph{single direction} in the self-conditioning
feedback loop, the loop that feeds each step's clean estimate into the next. Because the
failure is one-dimensional, a one-dimensional fix suffices, and
we propose one. \textbf{ACE} (Attractor-Contrast-Escape) subtracts that single, label-free
direction from the feedback at each step. Estimated once on the $105$M model, the direction
cuts repetition to near the human level while keeping quality competitive, and transfers near-unchanged to the $342$M and $652$M
models and across samplers; the same recipe recovers useful directions on other
architectures. Since Gen-PPL itself rewards repetition, we instead measure the compute each
fix needs to produce human-clean text, where ACE is $1.5$--$5\times$ cheaper.

\end{abstract}

\section{Introduction}
\label{sec:intro}

Continuous diffusion language models (DLMs) are a promising non-autoregressive
route to text generation: they denoise a whole sequence in parallel within a differentiable
embedding space, steerable by gradients and guidance. Self-conditioning~\citep{chen2022analog}
improves their sample quality by feeding the model's own clean estimate back into each step
to refine the next. Recent models such as ELF~\citep{hu2026elf} report low
generative perplexity (Gen-PPL), the number the field reads as generation quality. We find
that this headline hides a defect: ELF's samples repeat far more than human text, and Gen-PPL
\emph{rewards} the repetition instead of penalizing it. Strip the repetition and ELF-B's
Gen-PPL rises from $19.5$ to $27.7$, enough for the larger ELF-M to overtake it; the smallest
model posts the best Gen-PPL only because it repeats the most (Table~\ref{tab:money}).

The defect is heavy and systematic. A large share of ELF samples lock onto a few repeated
$4$-grams and loop them for hundreds of words (Table~\ref{tab:story}), which human text
essentially never does. The link is not only across models but within one: at a fixed setting,
sample-level repetition correlates with the GPT-2~\citep{radford2019language} PPL the samples
are scored by (Table~\ref{tab:binned}). The defect stays hidden because the certifying metric
is blind to it: repeated text is highly probable under the scorer, so it earns a
flatteringly low Gen-PPL, analogous to likelihood-based degeneration in
autoregressive generation~\citep{holtzmancurious,welleckneural}.

\begin{figure}[t]
  \centering
  \includegraphics[width=\linewidth]{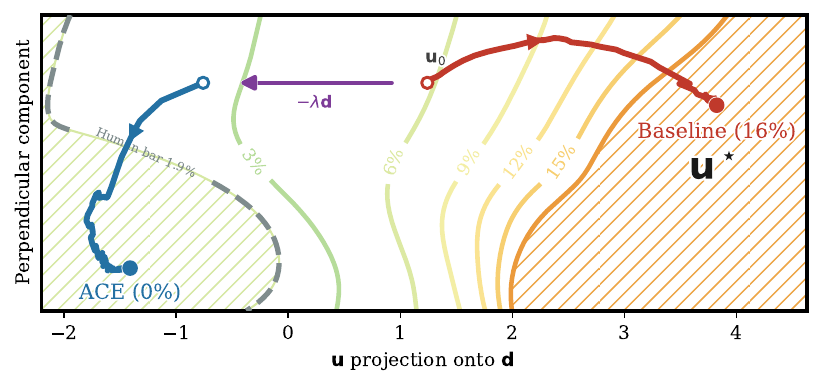}
  \caption{\textbf{Repetition is a basin; ACE escapes it.} Even as ELF denoises toward a clean
  sample, self-conditioning drags its representation $\bm{u}$ along one direction $\bm{d}$ (the
  high- minus low-repetition gap) into a repetition state $\bm{u}^\star$; the baseline slides into
  this basin (red), while ACE subtracts $\bm{d}$ to hold $\bm{u}$ in the human-clean zone (blue).
  Background: measured repetition rate.}
  \label{fig:basin}
\end{figure}

We trace the defect to its mechanism rather than stop at the symptom. Like audio feedback,
this self-conditioning loop settles on whatever is most self-predictable, which is repeated
content. Two probes pin this down. Turning the
feedback strength up, with nothing else changed, drives repetition up and Gen-PPL down
together (\S\ref{sec:exp}): the loop \emph{creates} the repetition the metric then rewards.
And linearizing the loop, its Jacobian has a single slowest-contracting mode, so the repeated
state is a one-dimensional \emph{contractive attractor} along one direction $\bm{d}$
(Fig.~\ref{fig:basin}; \S\ref{sec:theory}), a basin that sharper sampling only deepens. This
is specific to self-conditioned continuous DLMs (ELF, \mbox{Plaid~\citep{gulrajani2023likelihood}}), and
its one-dimensional geometry is exactly what makes a one-dimensional fix possible.

Because the attractor is one-dimensional, one direction is enough to escape it. \textbf{ACE}
(Attractor-Contrast-Escape)\footnote{Code: \url{https://github.com/ZhangShuai1230/ACE-DLM}} subtracts
that single direction $\bm{d}$ from the fed-back estimate at every step. The direction is recovered label-free, as the difference of means of
the feedback between trajectories trapped in the basin (top-repetition tertile) and those that
stay free (bottom tertile): no per-token labels, no auxiliary model, no retraining. Crucially
ACE acts where the defect is born, on the self-conditioning feedback rather than at token
selection, because repetition is set in the continuous latent, upstream of that selection, so
decode-time fixes are poorly placed to reach it. A single frozen direction, estimated once on the smallest
model within a closed-form usable window (\S\ref{sec:theory}), cuts repetition to near the
human level at competitive quality and transfers near-unchanged across inference knobs and
model sizes (cosine $0.82$--$0.96$ to the per-config re-estimate); the same recipe recovers
useful directions on other architectures (Plaid, LangFlow~\citep{chen2026langflow}).

Evaluating the fix needs care, since Gen-PPL is fooled by the very repetition ACE removes. We
therefore \emph{accept} text under a human repetition bar, read quality on the accepted set
with standard reference-free metrics (\S\ref{sec:exp}), and measure the compute needed to reach
genuinely non-repetitive text (\S\ref{sec:steer}); under this evaluation ACE makes human-clean
text $1.5$--$5\times$ cheaper at competitive quality.

\paragraph{Contributions.}
\begin{enumerate}[topsep=2pt,itemsep=1pt,leftmargin=1.4em]
\item \textbf{Gen-PPL rewards repetition.} Continuous DLMs repeat far more than human text;
  we show that Gen-PPL, the field's headline metric, \emph{rewards} rather than penalizes this
  and even reverses the model ranking, and we propose a defect-controlled evaluation that accepts
  text under a human-repetition bar and scores compute-to-clean instead of Gen-PPL (\S\ref{sec:exp},~\S\ref{sec:steer}).
\item \textbf{Its mechanism: an effectively one-dimensional attractor.} By direct ablation and a linear-stability analysis we trace
  the repetition to an effectively one-dimensional contractive attractor of the self-conditioning loop along
  one direction $\bm{d}$ (\S\ref{sec:theory}).
\item \textbf{Its fix: one frozen direction (ACE).} A single cheap, label-free, frozen direction, applied within a closed-form usable steering window, removes most repetition and
  transfers across knobs and scales, with the same recipe recovering useful directions on other architectures; under our evaluation it reaches
  human-clean text at comparable quality and $1.5$--$5\times$ lower cost
  (\S\ref{sec:steer},~\S\ref{sec:gen}).
\end{enumerate}

\paragraph{Relation to prior work.} Prior work exposes metric pathologies in (diffusion-)LM
evaluation~\citep{zheng2025masked,wang2022perplexity,franca2026hacking} or studies discretization
and decoding bottlenecks~\citep{li2022diffusion,dieleman2022continuous}; autoregressive-degeneration
work studies repetition along token-time~\citep{holtzmancurious,welleckneural};
and steering work shows low-dimensional interventions can control diffusion-LM
behavior~\citep{shnaidman2025activation}. We connect these lines: we surface a text-visible
repetition defect in self-conditioned continuous DLMs, trace it to the self-conditioning feedback
loop, and remove it with a single label-free feedback-direction intervention (full discussion in
App.~\ref{app:related}).

\section{Background and metrics}
\label{sec:background}

\subsection{ELF, self-conditioning, and the two samplers}

ELF~\citep{hu2026elf} is a continuous-embedding flow-matching language model.
Generation starts from Gaussian noise $\bm{z}_0$ and follows a trajectory
$t\in[0,1]$ from noise ($t{=}0$) to the clean text embedding ($t{=}1$), which is read out to token ids by an independent
per-position $\arg\max$. Two samplers are used: ODE
(deterministic Euler integration) and SDE
(Euler steps interleaved with partial noise re-injection, governed by the
sampler's rate parameter $\gamma$, which we call the \emph{churn} after the analogous
stochastic-sampler knob of \citet{karras2022elucidating}).

Self-conditioning~\citep{chen2022analog} is a refinement trick for diffusion sampling: instead
of predicting the clean data $\hat{\bm{x}}$ from the noised input alone, each step feeds back its
previous estimate as an extra input and refines it. It adds no extra forward pass, improves
sample quality, and is widely used in continuous DLMs (ELF, Plaid~\citep{gulrajani2023likelihood},
LangFlow~\citep{chen2026langflow}). Formally it turns the denoiser into a recurrence driven toward a
fixed point, the view our analysis builds on~(\S\ref{sec:theory}).

ELF reuses this same channel to distill classifier-free guidance: instead of two forward
passes per step, it feeds the previous estimate
$\hat{\bm{x}}_{\text{prev}}$ back together with a scalar SC-CFG scale $w$, and a single pass
produces the guided velocity. At
each step $i$,
\begin{equation}
  \bm{v}_i = f_\theta\big(\bm{z}_i,\,t_i,\,w,\,\hat{\bm{x}}_{\text{prev}}\big),
  \qquad \hat{\bm{x}}_{\text{prev}} \leftarrow \hat{\bm{x}}_i .
\end{equation}
This loop propagates the model's commitment across later steps and is central to the
repetition defect.

\subsection{The reported metrics}

\emph{Generative perplexity} (Gen-PPL) under GPT-2 Large is the standard
metric for unconditional DLM evaluation. For a generated text
$\bm{x}{=}(x_1,\dots,x_N)$,
\begin{equation}
  \textsc{PPL}(\bm{x}) = \exp\!\Big(\!-\tfrac{1}{N}\textstyle\sum_{i=1}^{N}
    \log p_{\text{GPT-2}}(x_i \mid x_{<i})\Big),
\end{equation}
aggregated at corpus level. Unigram entropy is reported as a check against
trivial collapse. Crucially, no $n$-gram repetition metric is reported in the original ELF paper or in most DLM
benchmarks: the gap this paper fills.

\subsection{The repetition metric}

We use the standard $4$-gram self-repetition rate, the
fraction of $4$-gram occurrences in a text that duplicate an earlier one:
\begin{equation}
  \textsc{rep}_4(\bm{x}) =
  \frac{\sum_{g\in\mathcal{G}}\max(c_{\bm{x}}(g)-1,\,0)}{|\mathcal{G}|},
\end{equation}
with $\mathcal{G}$ the multiset of $4$-grams and $c_{\bm{x}}(g)$ its counts.
This is identical to \texttt{seq-rep-4}~\citep{welleckneural}
(the same duplicate $n$-gram family, complementary to diversity
metrics~\citep{su2022contrastive,li2016diversity}); we tokenize by whitespace
and report the median over samples, with the human-clean acceptance threshold calibrated to
human text rather than an arbitrary constant (\S\ref{sec:exp}).

\section{The repetition defect: Gen-PPL ranks ELF backwards}
\label{sec:exp}

ELF's record-low generative perplexity comes largely from repetition that the metric
rewards rather than penalizes. We study the ELF series~\citep{hu2026elf} of unconditional
OpenWebText~\citep{gokaslan2019openwebtext} models at the $64$-step,
$\gamma{=}1.0$ operating point (SC-CFG $w{=}3$), measuring the defect metric of
\S\ref{sec:background} over $n{=}1000$ samples per model against $n{=}1000$ human
(BBC/XSum~\citep{narayan2018don}) articles (Table~\ref{tab:money}). No authoritative per-sample
cutoff for ``degenerate'' exists, so we calibrate to these: their \texttt{seq-rep-4} has median
$0.00\%$ and $95$th percentile $1.92\%$, the human-clean bar.\footnote{A generous bar; the few
percent of human text above it is genuinely repetitive (lists, names, refrains).} A sample is
\emph{human-clean} when its seq-rep-4 is under this bar, and we read two quantities off it: the
\emph{accept}
rate (the share of samples below it) and \emph{clean-PPL} (the Gen-PPL of the accepted samples
alone, so repetition can no longer lower it). Clean-PPL is a guardrail against passing the bar
with diverse nonsense, not the quality verdict, which rests on the reference-free signals in
Table~\ref{tab:money} (grammatical acceptability and within-text diversity).

\begin{table}[t]
  \centering\small
  \setlength{\tabcolsep}{4pt}
  \caption{\textbf{Gen-PPL ranks ELF backwards by rewarding repetition; one shared
    direction removes it at competitive quality.} Per-size baseline vs steered
    ($64$ steps, $\lambda{=}2$): defect metrics on the full $1000$-sample pool (left)
    and quality on the clean accepted set, length-matched (right). G-PPL/c-PPL: Gen-/clean-PPL;
    s-BLEU: self-BLEU.\protect\footnotemark}
  \label{tab:money}
  \begin{tabular}{l l ccc ccccc}
    \toprule
    & & \multicolumn{3}{c}{\emph{defect: full $1000$-pool}} & \multicolumn{5}{c}{\emph{quality: clean accepted set, length-matched}} \\
    \cmidrule(lr){3-5}\cmidrule(lr){6-10}
    Model & & rep\,\%\,$\downarrow$ & accept\,$\uparrow$ & G-PPL\,$\downarrow$ & c-PPL\,$\downarrow$ & CoLA\,$\uparrow$ & dist-$1$\,$\uparrow$ & dist-$3$\,$\uparrow$ & s-BLEU\,$\downarrow$ \\
    \midrule
    \multirow{2}{*}{ELF-B ($105$M)} & base  & $6.83$ & $0.07$ & $19.5$ & $27.7$ & $0.90$ & $0.53$ & $0.97$ & $0.09$ \\
    & steer & $2.11$ & $0.45$ & $27.4$ & $30.6$ & $0.90$ & $0.53$ & $0.97$ & $0.15$ \\
    \midrule
    \multirow{2}{*}{ELF-M ($342$M)} & base  & $1.89$ & $0.51$ & $22.1$ & $25.4$ & $0.92$ & $0.55$ & $0.97$ & $0.07$ \\
    & steer & $0.67$ & $0.88$ & $25.5$ & $26.1$ & $0.92$ & $0.56$ & $0.98$ & $0.10$ \\
    \midrule
    \multirow{2}{*}{ELF-L ($652$M)} & base  & $1.66$ & $0.56$ & $24.0$ & $26.2$ & $0.91$ & $0.56$ & $0.98$ & $0.07$ \\
    & steer & $0.71$ & $0.84$ & $27.1$ & $28.5$ & $0.91$ & $0.58$ & $0.98$ & $0.10$ \\
    \midrule
    \multicolumn{2}{l}{\textit{Human (BBC)}} & $0.00$ & $0.95$ & $13.8$ & $14.4$ & $0.89$ & $0.61$ & $0.99$ & $0.05$ \\
    \bottomrule
  \end{tabular}
\end{table}
\footnotetext{We fix $64$ steps across sizes; ours then matches ELF's reported $64$-step ELF-M/L
(Gen-PPL $22.1/24.0$ vs.\ $21.7/23.3$), while ELF reports ELF-B at $32$ steps ($24.1$;
\url{https://github.com/lillian039/ELF}).}

\paragraph{The defect, and the backwards ranking.} ELF's output carries a defect absent
from human text yet invisible to the metrics certifying it: heavy $4$-gram self-repetition,
common in ELF and near-absent in human prose (Table~\ref{tab:money};
\citealp{welleckneural}). The ranking it produces runs \emph{backwards}: Gen-PPL
places the smaller ELF-B above the larger ELF-M, yet our defect-controlled clean-PPL reverses
them. Strip the repetition and ELF-M is the better model; ELF-B led only because it repeats
most: repeated text is trivially predictable, so the GPT-2 scorer assigns it high probability
and a flatteringly low Gen-PPL. This is not a measurement artifact
(controls in App.~\ref{app:setup}); a side-by-side example of the defect and the fix is in
App.~\ref{app:samples} (Table~\ref{tab:story}).

\paragraph{Self-conditioning creates the repetition that Gen-PPL rewards.}
Feeding back an \emph{attenuated} estimate
$\alpha\hat{\bm{x}}$ and turning $\alpha$ from $0$ (feedback off) up to $1$ (full), with
nothing else changed,\footnote{No retraining: classifier-free guidance already trains the model both with the
self-conditioning signal ($\alpha{=}1$) and without it ($\alpha{=}0$), so the sweep only
interpolates between regimes it already runs.} drives repetition up and
Gen-PPL down together: the metric calls the generator more than four times ``better'' just as
it grows most repetitive (Table~\ref{tab:alpha}). Acceptance under the human bar collapses as $\alpha$ rises: the Gen-PPL gain is part real
fluency and part rewarded repetition, which the metric cannot tell apart.

\begin{table}[t]
  \centering\small
  \caption{\textbf{Raising the self-conditioning strength $\alpha$ increases repetition
    and lowers Gen-PPL together.} Feeding back $\alpha\hat{\bm{x}}$, no retraining (ELF-B, $64$
    steps, $\gamma{=}1.0$); accept: share under the $1.92\%$ human bar. Gen-PPL: all samples; clean-PPL: accepted subset only.}
  \label{tab:alpha}
  \begin{tabular}{l c c c c c c}
    \toprule
    $\alpha$ (SC strength) & $0.0$ (off) & $0.3$ & $0.5$ & $0.7$ & $0.9$ & $1.0$ (full) \\
    \midrule
    rep median\,\%          & $0.00$ & $0.50$ & $1.95$ & $4.57$ & $6.44$ & $6.83$ \\
    accept                  & $1.00$ & $0.94$ & $0.49$ & $0.18$ & $0.09$ & $0.07$ \\
    \midrule
    Gen-PPL\,$\downarrow$   & $97.5$ & $48.3$ & $29.6$ & $21.7$ & $19.6$ & $19.5$ \\
    clean-PPL\,$\downarrow$ & $97.5$ & $49.3$ & $34.0$ & $29.2$ & $27.6$ & $27.7$ \\
    \bottomrule
  \end{tabular}
\end{table}

\section{Mechanism: repetition is a one-dimensional attractor of the self-conditioning loop}
\label{sec:theory}

The ablation of \S\ref{sec:exp} identifies the self-conditioning feedback as a causal driver of
repetition; we next ask whether that effect is diffuse or concentrated: does the loop's drift lie along one
identifiable direction that predicts a sample's repetition and, when subtracted, suppresses it? We find that it does: repetition is a
contracting attractor of the self-conditioning loop that amplifies the most
self-predictable signal, repeated content, and is drawn toward a repetition fixed point $\bm{u}^\star$
(Fig.~\ref{fig:basin}). An idealized linear model of this attractor predicts the failure is
\emph{one-dimensional}, repetition living on one slow mode $\bm{v}_1$ separable from the benign
denoising that writes the text; we then test each prediction against measurement (the formal
statements, and the idealizing assumptions they rest on, are in App.~\ref{app:theory}). The
cheap direction $\bm{d}$ that exploits it is \S\ref{sec:steer}; its transfer across knobs and sizes
is \S\ref{sec:gen}.

\paragraph{The loop splits into a self-conditioning response and a denoising drive.}
ELF denoises by \emph{self-conditioning}: each step $k$ feeds the model its own previous clean-latent
estimate $\hat{\bm{x}}_k\in\mathbb{R}^{L\times e}$ and reads off the next; iterating this feedback is
the loop \citep{chen2022analog}. We track its position pool ($e$ the embedding dimension, $L$ the
length),
\begin{equation}
  \bm{u}_k \;=\; \tfrac{1}{L}\textstyle\sum_{l=1}^{L}\hat{\bm{x}}_k[l]\;\in\;\mathbb{R}^{e},
  \qquad
  \bm{u}_{k+1} \;=\; g(\bm{u}_k) \;=\;
  \underbrace{\bm{s}(\bm{u}_k)}_{\text{self-conditioning}}+\underbrace{\bm{f}_k}_{\text{denoising drive}}.
  \label{eq:scmap}
\end{equation}
One step $g$ splits in two: the \emph{self-conditioning response} $\bm{s}(\bm{u}_k)$, how the next
estimate depends on the fed-back one (the channel by which repeated content reinforces itself), and
the \emph{denoising drive} $\bm{f}_k$, the ordinary denoising the step would do with the feedback
off (set by the noise level $t_k$ and latent $\bm{z}_k$), which carries the bulk of the motion and
writes the text. Repetition is governed mainly by $\bm{s}$, with only the small on-axis component of $\bm{f}_k$
setting the driven offset; $\bm{f}_k$ is otherwise the benign carrier it rides on. App.~\ref{app:theory} derives the split from the model's clean-latent prediction $\bm{X}_\theta$
(the fed-back $\hat{\bm{x}}$ of \S\ref{sec:background}) by a first-order expansion, pooling the
feedback over positions into a tractable $e$-dimensional state whose link to text-level
repetition is empirical (Fig.~\ref{fig:mechanism}a).

\begin{assumption}[Repetition attractor]\label{ass:attractor}
The self-conditioning map $\bm{s}$ has a fixed point $\bm{u}^\star$ (repeated content) and is
$C^1$ near it with a contracting Jacobian $\bm{J}=\mathrm{D}\bm{s}(\bm{u}^\star)$. For the analysis
we idealize $\bm{J}$ as symmetric, with orthonormal eigenvectors $\{\bm{v}_i\}$ and real eigenvalues
$1>\mu_1\ge\mu_2\ge\cdots\ge0$. The leading eigenvector $\bm{v}_1$ is the \emph{repetition axis}, and
$\rho:=1-\mu_1\in(0,1)$ its contraction rate.
\end{assumption}

The symmetry idealization is used only to decouple the $\bm{v}_1$ coordinate into the scalar
recursion below; the measured finite-difference $\bm{J}$ is only approximately symmetric
(App.~\ref{app:analysis}), so the theory reads as a local scalar approximation. The axis $\bm{v}_1$
is slowest only relative to the faster off-axis modes, not near-marginal (measured $\mu_1{\approx}0.15$).

\paragraph{Repetition is one-dimensional.}
Linearizing $\bm{s}$ about $\bm{u}^\star$, its response contracts fastest off, slowest along, the
leading eigenvector $\bm{v}_1$, so the per-step change decomposes as
\begin{equation}
  \Delta\bm{u}_k \;\approx\;
  \underbrace{\underset{\text{leading}}{\beta_k\,\bm{v}_1}\;+\;\underset{\text{decays}}{\bm{r}_k}}_{\text{repetition mode}}
  \;+\; \underbrace{\bm{f}_k}_{\text{denoising drive}}
  \label{eq:decomp}
\end{equation}
(Lemmas~\ref{lem:linearize}--\ref{lem:decomp}, App.~\ref{app:theory}): the repetition mode $\beta_k\bm{v}_1$ along the
axis (coefficient $\beta_k=-\rho\,a^{(1)}_k$, with $a^{(1)}_k=\langle\bm{u}_k-\bm{u}^\star,\bm{v}_1\rangle$
the \emph{repetition level}, how far the feedback sits along $\bm{v}_1$), the subordinate
off-axis modes $\bm{r}_k\perp\bm{v}_1$ (the same self-conditioning response on the
faster-contracting directions), and the near-orthogonal drive $\bm{f}_k$. Because $\bm{v}_1$ contracts the
slowest (spectral gap $\mu_1/\mu_2$), the off-axis transients in $\bm{r}_k$ die out faster than the
$\bm{v}_1$ component, so the structured residual concentrates on $\bm{v}_1$: repetition is
\emph{effectively one-dimensional} along this axis. Freezing the drive
($\bm{f}_k\equiv\bm{f}$), the repetition level settles at
\begin{equation}
  \bm{u}_\infty \;=\; \bm{u}^\star + (\bm{I}-\bm{J})^{-1}\bm{f}
  \label{eq:driven-fp}
\end{equation}
(Lemma~\ref{lem:drivenfp}, App.~\ref{app:theory}): the distance $|a^{(1)}_\infty|=|f^{(1)}|/\rho$ along the axis is the
driven offset, the small drive component $f^{(1)}=\langle\bm{f},\bm{v}_1\rangle$ divided by the
contraction rate $\rho$, largest along $\bm{v}_1$ since $\bm{v}_1$ has the smallest $1-\mu_i$; the
faster off-axis modes settle at smaller offsets.

\paragraph{The repetition axis is real and dominant.}
The measured loop bears this out. As the basin forms the spectral gap $\mu_1/\mu_2$ rises,
$\bm{v}_1$ becoming the clearly dominant mode (Fig.~\ref{fig:mechanism}b, Tab.~\ref{tab:jac}); a
sample's repetition level $a^{(1)}$ predicts its final repetition (Fig.~\ref{fig:mechanism}a); and
the cheap difference-of-means $\bm{d}$ (\S\ref{sec:steer}) aligns with $\bm{v}_1$, the overlap
$\lvert\cos(\bm{v}_1,\bm{d})\rvert$ climbing to $0.55$ as the basin forms (Fig.~\ref{fig:mechanism}b).
Here $\bm{d}$ need not equal the single-point eigenvector $\bm{v}_1$: it is a trajectory-averaged
steering direction that partially aligns with $\bm{v}_1$ yet steers better (Tab.~\ref{tab:destim}) by
integrating the basin-entry drift over the whole trajectory. Repetition
concentrates on this one mode: the defect is effectively one-dimensional. We obtain $\bm{v}_1$ and
the spectral gap from the feedback Jacobian (Alg.~\ref{alg:vone}).

\begin{figure}[t]
  \centering
  \includegraphics[width=0.98\linewidth]{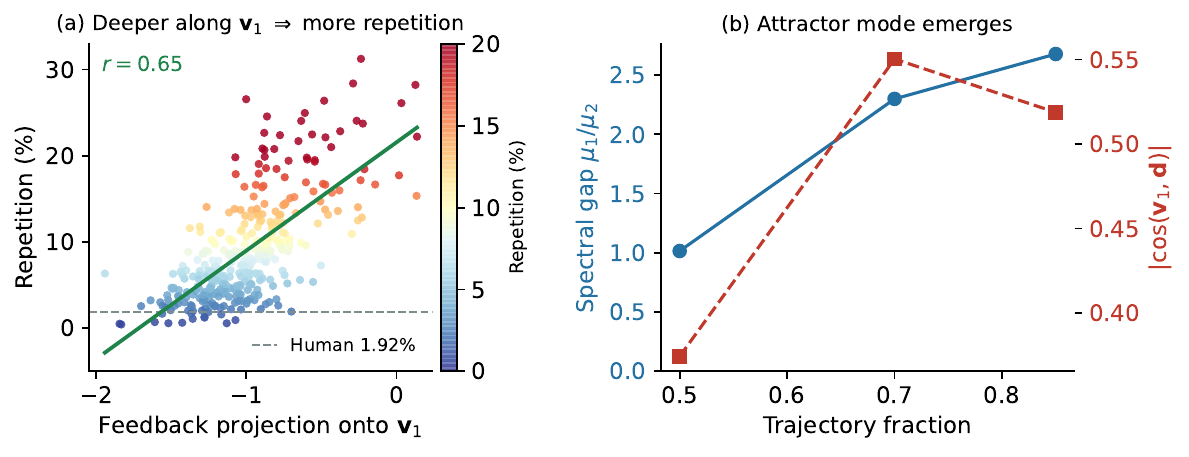}
  \caption{\textbf{Repetition concentrates on one dominant mode $\bm{v}_1$, with which the cheap $\bm{d}$ partially aligns.}
    (a)~mean feedback projection onto the repetition axis $\bm{v}_1$ (the leading Jacobian
    eigenvector, read once the basin has formed, at trajectory fraction ${\sim}0.85$) vs final
    repetition. (b)~as the basin forms a dominant mode emerges ($\mu_1/\mu_2$); the cheap $\bm{d}$
    partially aligns with this local Jacobian mode while integrating basin-entry drift over the
    trajectory (Tab.~\ref{tab:jac}).}
  \label{fig:mechanism}
\end{figure}

\section{ACE: one subtracted direction escapes the attractor}
\label{sec:steer}

The mechanism is prescriptive (\S\ref{sec:theory}): if repetition is a single direction in the
fed-back estimate, subtract only that direction and keep the rest. We call this \textbf{ACE}
(\emph{Attractor-Contrast-Escape}): \emph{Contrast} the average self-conditioning feedback of
repetitive vs.\ non-repetitive trajectories to get the attractor direction $\bm{d}$ (a difference of
means, in the spirit of activation steering), then \emph{Escape} by steering against it. This is
principled, not heuristic: from just two class means, $\bm{d}$ recovers the mechanism's mode $\bm{v}_1$
under an idealized separability model (Prop.~\ref{prop:dmean}, App.~\ref{app:theory}) and empirically
aligns with the measured dominant mode.
Algorithm~\ref{alg:steer} instantiates it.

\begin{algorithm}[t]
\caption{Difference-of-means self-conditioning steering (ACE)}
\label{alg:steer}
\begin{algorithmic}[1]
\Require full-SC sampler $G$, count $N$ (estimation); model $f_\theta$, direction
  $\bm{d}$, strength $\lambda$, steps $T$ (generation)
\Ensure attractor direction $\bm{d}$; one steered sample
\Procedure{EstimateDirection}{$G,\,N$}
  \For{$n = 1,\dots,N$}
    \State $\bm{s}_n \gets \tfrac{1}{T}\textstyle\sum_{k=1}^{T}\tfrac{1}{L}\sum_{l=1}^{L}\hat{\bm{x}}^{(n)}_{k}[l]$
      \Comment{mean self-conditioning feedback}
    \State $r_n \gets \textsc{rep}_4(\textsc{dec}(\bm{z}_n))$ \Comment{final repetition}
  \EndFor
  \State $\mathcal{T}\gets\{\,n: r_n \ge q_{2/3}(r)\,\}$ \Comment{trapped: top-rep tertile}
  \State $\mathcal{F}\gets\{\,n: r_n \le q_{1/3}(r)\,\}$ \Comment{free: bottom-rep tertile}
  \State $\bm{d}\gets \tfrac{1}{|\mathcal{T}|}\textstyle\sum_{n\in\mathcal{T}}\bm{s}_n
                    - \tfrac{1}{|\mathcal{F}|}\sum_{n\in\mathcal{F}}\bm{s}_n$
  \State \Return $\bm{d}/\lVert\bm{d}\rVert$
\EndProcedure
\Statex
\Procedure{SteeredGenerate}{$f_\theta,\,\bm{d},\,\lambda,\,T$}
  \State $\bm{z}_0\sim\mathcal{N}(0,\sigma^2 I)$
  \State $\hat{\bm{x}}_0\gets\bm{0}$
  \For{$k = 1,\dots,T$}
    \State $(\bm{z}_k,\,\hat{\bm{x}}_k)\gets \textsc{Step}(f_\theta,\bm{z}_{k-1},t_k,\,\tilde{\bm{x}}_{k-1})$
    \State $\tilde{\bm{x}}_k \gets \hat{\bm{x}}_k - \lambda\,\bm{d}\,\bm{1}_L^{\!\top}$
      \Comment{$\star$ subtract along $\bm{d}$, broadcast over $L$ positions}
  \EndFor
  \State \Return $\arg\max_{v}\,\textsc{dec}(\bm{z}_T)[\cdot,v]$
\EndProcedure
\end{algorithmic}
\end{algorithm}

\subsection{Main results: repetition and quality}

\paragraph{One frozen direction cuts repetition at competitive quality.} At the operating point
$\lambda{=}2$ (the cross-size result, Tab.~\ref{tab:money}; full dose sweep Tab.~\ref{tab:lambda}),
steering cuts median repetition to near the human level at competitive clean-PPL, with reference-free
quality preserved (grammaticality and within-text diversity): the targeted subtraction keeps the
rest of the feedback intact. That a \emph{single} difference-of-means direction recovers most of the gap is
direct evidence the repetition signal is one-dimensional and separable from coherence. Steering is
one causal intervention on $\bm{d}$ (at inference, Fig.~\ref{fig:process}); a training-time
intervention confirms it: an anti-attractor regularizer on the $\bm{d}$-component of the feedback
lowers plain repetition from $6.8\%$ to $3.3\%$ at a small clean-PPL cost ($27.9{\to}29.1$ vs.\ the
matched continue-train control), while the same fine-tune without the penalty barely moves it
(Table~\ref{tab:training}; recipe in App.~\ref{app:training}).

\begin{table}[t]
  \centering\small
  \caption{\textbf{Penalizing $\bm{d}$ in training cuts repetition, at a small clean-PPL cost.}
    Continued fine-tuning of ELF-B ($128$ optimizer steps); anti-attractor vs.\ the matched
    continue-train control; clean-PPL on the reject-to-$1000$ accepted set.}
  \label{tab:training}
  \begin{tabular}{l c c c c}
    \toprule
    Variant & rep\,\% & accept & Gen-PPL & clean-PPL \\
    \midrule
    Original checkpoint & $6.8$ & $0.07$ & $19.5$ & $27.4$ \\
    Continue-train & $6.4$ & $0.08$ & $20.0$ & $27.9$ \\
    \textbf{Anti-attractor reg} & $\mathbf{3.3}$ & $0.24$ & $25.4$ & $29.1$ \\
    \bottomrule
  \end{tabular}
\end{table}

\begin{figure}[t]
  \centering
  \includegraphics[width=0.98\linewidth]{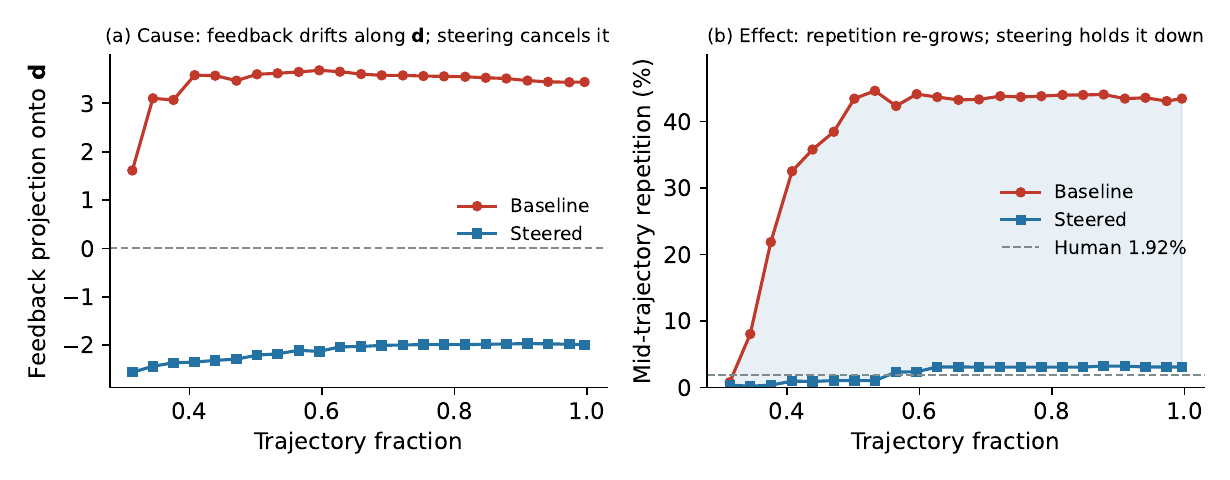}
  \caption{\textbf{ACE cancels the repetition drift.}
    Self-conditioning feedback drifts along a single direction $\bm{d}$ into a repetition
    basin (a); ACE subtracts $\bm{d}$ and cancels that drift, holding repetition down (b).
    Baseline (red) vs steered (blue), ELF-B.}
  \label{fig:process}
\end{figure}

\paragraph{The steer is bounded: a usable dose window.}
Subtracting $\lambda\bm{d}$ escapes only within a closed-form window $[\lambda^\star,\lambda_{\max}]$
(Prop.~\ref{prop:lambda}, App.~\ref{app:theory}); $\lambda{=}2$ is the operating point inside it
(Tab.~\ref{tab:lambda}, Fig.~\ref{fig:window}). Below it the dose is too weak and repetition is not
cut; above it repetition stays low but two costs appear: the perturbed latent leaves the real-token
manifold and the text decodes to non-words, an observable proxy for leaving the manifold (the rate spiking by $\lambda{\approx}8$,
Fig.~\ref{fig:window}), and the accepted text turns generic (self-BLEU climbs
with $\lambda$, Tab.~\ref{tab:lambda}).

\paragraph{Why steering keeps quality competitive.}
The drive $\bm{f}_k$ is large but nearly orthogonal to $\bm{d}{\approx}\bm{v}_1$
(Prop.~\ref{prop:dmean}; App.~\ref{app:theory}). ACE subtracts only $\lambda\bm{d}$, so it cancels
repetition while leaving the text-writing drive intact, dropping repetition to near the human bar at
competitive reference-free quality (Tab.~\ref{tab:money}), with self-BLEU the only mild cost.

\subsection{Compute-to-clean comparison}

\paragraph{ACE beats every feedback-side alternative we test, and makes clean text cheap.}
We benchmark the four routes to less repetition on a common, hard-to-game footing: a reject-to-$N$
loop scored by compute (NFE) and clean-PPL (Tab.~\ref{tab:benchmark}; each route defined in
App.~\ref{app:softsc}). ACE is the only one both cheap and clean; each alternative fails one axis.
Rejecting full-SC post hoc is clean but expensive, its accepts scarce because the low Gen-PPL is the
very repetition the bar removes. Disabling self-conditioning (SC-reset) is cheap but decodes to the
diverse nonsense the clean-PPL guard exists to catch. Soft-SC either barely cuts repetition or sacrifices coherence to remove it. ACE
instead subtracts one direction and leaves the rest of the feedback intact, reaching human-clean
text at competitive clean-PPL and $1.5$--$5\times$ cheaper than full-SC rejection across sizes
(Fig.~\ref{fig:cost}).

\begin{table}[t]
  \centering\small
  \caption{\textbf{Only ACE reaches the human bar cheaply and at competitive clean-PPL.}
    Compute-to-clean at $\gamma{=}1.0$, $64$ steps: direct-generation repetition, expected NFE
    ($10^3$ forward passes) to one human-clean sample (\texttt{seq-rep-4}$\,\le1.92\%$), and
    clean-PPL on the reject-to-$1000$ accepted set. Full grid over all soft-SC variants and doses
    in Tab.~\ref{tab:benchmark-full}.}
  \label{tab:benchmark}
  \setlength{\tabcolsep}{6pt}
  \begin{tabular}{@{}l ccc@{}}
    \toprule
    Method & rep\,\% & NFE & clean-PPL \\
    \midrule
    Human (reference) & $0.00$ & -- & $14.4$ \\
    Full-SC $+$ reject & $6.82$ & $1040$ & $27.6$ \\
    SC-reset & $0.00$ & $65$ & $97.4$ \\
    soft-SC mag ($\alpha{=}0.5$) & $1.94$ & $138$ & $34.5$ \\
    soft-SC noise ($\sigma{=}0.2$) & $6.15$ & $997$ & $27.8$ \\
    \textbf{ACE, $\lambda{=}2$ (ours)} & $2.11$ & $144$ & $30.3$ \\
    \bottomrule
  \end{tabular}
\end{table}

\begin{table}[t]
  \centering\small
  \caption{\textbf{No alternative direction beats ACE's cheap difference-of-means $\bm{d}$.}
    Direction estimators at the $\gamma{=}1.0$ operating point, dose fixed at $\lambda{=}2$
    (rep is the median); clean-PPL is a guardrail. Recoverability of $\bm{d}$:
    Tab.~\ref{tab:destim-robust}; dose-form ablation: Tab.~\ref{tab:destim-dose}.}
  \label{tab:destim}
  \begin{tabular}{l c c}
    \toprule
    Direction ($\lambda{=}2$) & rep\,\% & clean-PPL \\
    \midrule
    Baseline (unsteered) & $6.83$ & $27.7$ \\
    \textbf{ACE} (diff.-of-means $\bm{d}$) & $\mathbf{2.11}$ & $30.6$ \\
    \midrule
    LDA & $2.29$ & $29.6$ \\
    Logistic & $3.25$ & $31.2$ \\
    Top-PC (unsup.) & $5.87$ & $33.3$ \\
    Jacobian $\bm{v}_1$ & $5.03$ & $23.9$ \\
    \bottomrule
  \end{tabular}
\end{table}

\medskip\noindent\emph{Two ablations isolate the remaining choices, the direction and the dose.}

\subsection{Ablations: direction and dose}

\paragraph{The direction: $\bm{d}$ matches or beats every estimator.} In the idealized separability model the difference of means coincides with
two theory-optimal directions at once, the dynamical optimum $\bm{v}_1$ (Prop.~\ref{prop:dmean}) and the
Fisher discriminant \citep{fisher1936use}, and it stays \emph{cheap and robust}: where those would need the loop's Jacobian
eigenvector or a noisy high-dimensional covariance inverse, $\bm{d}$ reads off only two class means.
It matches or beats every alternative at the operating point (Tab.~\ref{tab:destim}): a regularized LDA,
the optimal linear discriminant, only matches it, and $\bm{d}$ is in fact its simpler isotropic
special case; the unsupervised top-PC and Jacobian eigenvector steer worse. The Jacobian eigenvector's
lower clean-PPL is not a quality win: its direct-generation repetition stays high ($5.03\%$), so its
accepted set is a small filtered subset rather than a fix. The direction itself is robustly recoverable:
$\cos$ to $\bm{d}$ stays in $[0.97,1.00]$ across tertile fractions and step windows and collapses to near
zero for random or label-permuted controls (Tab.~\ref{tab:destim-robust}). A black-box search lowers
repetition only slightly further ($1.85$ vs $2.11$) at ${\sim}8\times$ the cost and along essentially the
same axis ($\cos{\approx}0.92$) (Tab.~\ref{tab:optd}); this cheap, label-free difference-of-means
direction is thus near-optimal.

\paragraph{The dose: a fixed $\lambda$ beats exact projection.} Per-step projection (subtract the
instantaneous $\bm{d}$-component, $\hat{\bm{x}}_k-\langle\hat{\bm{x}}_k,\bm{d}\rangle\bm{d}$) is the
apparently optimal dose, removing exactly the offending component so it can never over-steer; yet it
\emph{under}-doses, because the loop re-amplifies that component before the next step, and a fixed
$\lambda{\approx}2$ that pre-compensates the amplification does better (Tab.~\ref{tab:destim-dose}).
The dose matches the loop's Jacobian gain to an order of magnitude; the full sweep and usable window
are in App.~\ref{app:window}.

\section{The defect and the fix generalize across knobs, sizes, and models}
\label{sec:gen}

We test whether the steering fix of \S\ref{sec:steer} is an artefact of one
configuration, and whether the underlying repetition defect is specific to ELF. Neither holds.

\paragraph{Steering generalizes across every inference knob.}
Steering at $\lambda{=}2$ drops repetition under every inference knob: across denoising
\emph{steps} and \emph{guidance} scale, the two basin-deepening knobs in Figure~\ref{fig:knobs},
and across the remaining knobs (noise scale, SDE/ODE sampler, SDE churn $\gamma$, and an
$8$-seed robustness band) at competitive clean-PPL (Table~\ref{tab:crosssize}).

\begin{figure}[t]
  \centering
  \includegraphics[width=0.92\linewidth]{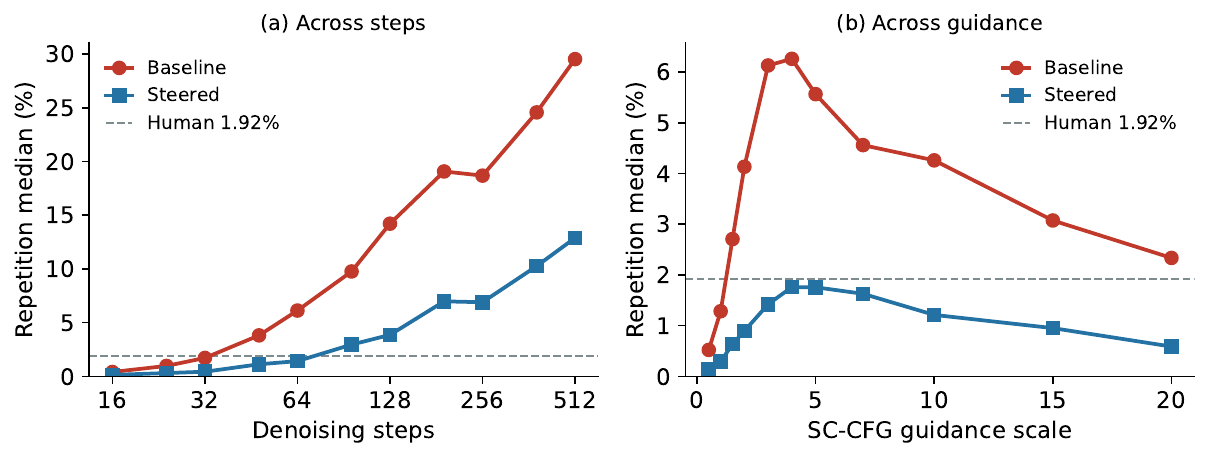}
  \caption{\textbf{Baseline repetition grows steeply with steps and guidance; steering
    suppresses it throughout.} Baseline (red) vs single $\bm{d}$ ($\lambda{=}2$, blue)
    over (a)~denoising steps and (b)~guidance scale.}
  \label{fig:knobs}
\end{figure}

\paragraph{Cross-model: the direction transfers to other self-conditioned LMs.}
The same difference-of-means procedure recovers a steerable attractor direction on other
continuous-latent diffusion LMs, so the mechanism is not ELF-specific (Table~\ref{tab:crossmodel}).
On LangFlow and the same-recipe Plaid~\citep{gulrajani2023likelihood}, both soft self-conditioning hints
like ELF's, the same recipe transfers: it cuts LangFlow's repetition outright and Plaid's heavy-repetition tail.

\begin{table}[t]
  \centering
  \footnotesize
  \caption{\textbf{The repetition defect and the steering direction generalize across continuous-latent diffusion language models.} The same difference-of-means recipe recovers useful steering directions on other soft self-conditioned LMs; the rep$>$5\% tail shows the defect is worse than the median suggests.}
  \label{tab:crossmodel}
  \setlength{\tabcolsep}{5pt}
  \begin{tabular}{@{}l c c c c@{}}
    \toprule
    & \multicolumn{4}{c}{base\,$\to$\,steer} \\
    \cmidrule(lr){2-5}
    Model & rep\,\% (median) & rep\,$>$\,5\,\% & Gen-PPL & clean-PPL \\
    \midrule
    LangFlow & $0.92\to\mathbf{0.27}$ & $20.3\to\mathbf{6.3}$ & $36.8\to44.2$ & $44.6\to46.8$ \\
    \addlinespace[2pt]
    Plaid & $0.77\to\mathbf{0.73}$\footnotemark & $16.0\to\mathbf{10.2}$ & $19.5\to20.2$ & $22.5\to22.3$ \\
    \bottomrule
  \end{tabular}
\end{table}
\footnotetext{Plaid: pooled over several different-seed $4096$-step batches (baseline $n{=}1536$,
steer $n{=}1632$), steered along the \emph{raw} $n{=}1000$ difference-of-means
($\lVert\bm{d}\rVert{=}0.044$; a unit vector over-doses and diverges). The steer barely moves the
median but cuts the rep$>$5\% tail by a third; all clean-PPL are reject-to-$1000$.}

\paragraph{Highlight: one direction, estimated once, transfers across every knob and size, tuning-free.}
A \emph{single} ELF-B direction $\bm{d}_B$, estimated once from a few hundred unlabeled
samples on the smallest model, steers near-natively across all four inference knobs (steps,
guidance, churn, noise) and across model sizes ELF-B/M/L (Table~\ref{tab:crosssize}): cosine
$0.82$--$0.96$ to the natively re-estimated axis across knobs, $0.85$--$0.88$ across sizes.
So $\bm{d}$ is a property of the model family, not of the size or operating
point, and one $\bm{d}$ suffices. \emph{That a single direction steers every size hints the
attractor is not a per-checkpoint quirk but a systematic bias of the self-conditioned training
recipe, set by the shared paradigm rather than learned afresh at each scale.}

\section{Conclusion}
\label{sec:conclusion}

We showed that the headline metric of self-conditioned (continuous) diffusion language
models rewards repetition: ELF's low Gen-PPL comes mostly from repetition it rewards, so on
Gen-PPL the $105$M model outranks the $342$M one that beats it once repetition is controlled. We traced this to its source, a contractive
fixed point of the self-conditioning loop that forms a one-dimensional basin, and turned that
mechanism into a fix. We propose ACE
(Attractor-Contrast-Escape): it recovers the basin's direction $\bm{d}$ once by a difference of
means and subtracts it from the feedback; this removes most repetition, transfers across
samplers and model sizes, and is partly internalizable by training. Since Gen-PPL rewards the
very repetition ACE removes, we score the fix on a compute-to-clean evaluation against a human
repetition floor, where it reaches genuinely non-repetitive text at competitive clean-PPL and
$1.5$--$5\times$ lower cost. That one direction serves every model size points to a defect of the self-conditioned paradigm
itself rather than any single checkpoint, and the same construction recovers useful directions on
other architectures.

\bibliography{paper}
\bibliographystyle{iclr2025_conference}

\appendix
\section{Formal analysis and empirical validation}\label{app:theory}

This appendix develops the idealized linear model of the self-conditioning loop in
derivation order, then validates the structure on the trained network. We linearize
the loop (Lemma~\ref{lem:linearize}), read off the per-step decomposition splitting
each update into drive and repetition mode (Lemma~\ref{lem:decomp}), and locate where
the driven loop settles (Lemma~\ref{lem:drivenfp}). From this we derive the two
consequences used in \S\ref{sec:theory}: a label-free difference of means recovers the
repetition axis (Prop.~\ref{prop:dmean}), and subtracting it escapes the basin within a
closed-form dose window (Prop.~\ref{prop:lambda}). Every displayed result is exact for this
idealized model under its stated assumptions (a symmetrized Jacobian, an exogenous drive, and the
scalar approximations flagged at each step); the final empirical section then estimates the Jacobian
on the real network and checks each prediction against measurement.

\newtheorem{lemma}{Lemma}

\subsection{Structure of the linearized loop}\label{app:linear}

\paragraph{Setup: where $\bm{s}$, $\bm{f}_k$, and $\bm{J}$ come from.}
We reuse the pooled-feedback loop $\bm{u}_{k+1}=g(\bm{u}_k)$ of \eqref{eq:scmap} and
Assumption~\ref{ass:attractor}, writing $\overline{(\cdot)}=\tfrac1L\sum_{l=1}^{L}(\cdot)[l]$ for
the position pool. One step is the trained model $\bm{X}_\theta$ evaluated on its two inputs, the
current noisy latent $\bm{z}_k$ and the fed-back estimate (taken uniform across positions, the
pooling assumption of \S\ref{sec:theory}), at noise level $t_k$:
\begin{equation}
  g(\bm{u}_k)\;=\;\overline{\bm{X}_\theta\!\big([\bm{z}_k,\,\bm{u}_k\bm{1}_L],\,t_k\big)} .
  \label{eq:gmodel}
\end{equation}
The model sees $\bm{u}_k$ only through the fed-back channel, so we expand in that channel
about the repetition fixed point $\bm{u}^\star$ (Assumption~\ref{ass:attractor}):
\begin{equation}
  \bm{X}_\theta\!\big([\bm{z}_k,\bm{u}_k],t_k\big)
  \;=\;\underbrace{\bm{X}_\theta\!\big([\bm{z}_k,\bm{u}^\star],t_k\big)}_{\text{zeroth order}}
  \;+\;\underbrace{\bm{J}\,(\bm{u}_k-\bm{u}^\star)}_{\text{first order}}
  \;+\;o(\lVert\bm{a}_k\rVert).
  \label{eq:taylor}
\end{equation}
The three objects used throughout are the terms of \eqref{eq:taylor}: the
\emph{denoising drive} $\bm{f}_k:=\overline{\bm{X}_\theta([\bm{z}_k,\bm{u}^\star],t_k)}-\bm{u}^\star$
(zeroth order: a function of $\bm{z}_k,t_k$, constant in $\bm{u}_k$), the self-conditioning
Jacobian $\bm{J}:=\partial\,\overline{\bm{X}_\theta}/\partial\bm{u}\big|_{\bm{u}^\star}$, and the
self-conditioning map
$\bm{s}(\bm{u}):=\bm{u}^\star+\bm{J}(\bm{u}-\bm{u}^\star)+o(\lVert\bm{u}-\bm{u}^\star\rVert)$, which
fixes $\bm{u}^\star$ by construction, $\bm{s}(\bm{u}^\star)=\bm{u}^\star$. Collecting them gives the
split
\begin{equation}
  g(\bm{u}_k)\;=\;\bm{s}(\bm{u}_k)+\bm{f}_k .
  \label{eq:gsplit}
\end{equation}
The one substantive reduction is treating the latent trajectory $\{\bm{z}_k\}$ as exogenous,
so $\bm{f}_k$ acts as a $\bm{u}_k$-independent forcing (its $\bm{u}_k$-dependence beyond linear
order is the $o(\lVert\bm{a}_k\rVert)$ remainder). Downstream measurements support this: the drive
is nearly orthogonal to the repetition axis ($\lvert\cos(\bm{f}_k,\bm{v}_1)\rvert{=}0.15$;
App.~\ref{app:empirical}) and $\bm{J}$ is contracting with a dominant mode aligned to $\bm{d}$
(Tab.~\ref{tab:jac}).

Let $\bm{a}_k=\bm{u}_k-\bm{u}^\star$ be the residual, with coordinates
$a^{(i)}_k=\langle\bm{a}_k,\bm{v}_i\rangle$ in the eigenbasis of Assumption~\ref{ass:attractor};
by the symmetric-$\bm{J}$ idealization $\bm{v}_1$ is both a left and right eigenvector, so the
$\bm{v}_1$-coordinate decouples below.

\begin{lemma}[Linearized loop]\label{lem:linearize}
Near the fixed point the residual obeys the linear recursion
\begin{equation}
  \bm{a}_{k+1}\;=\;\bm{J}\,\bm{a}_k+\bm{f}_k ,
  \label{eq:residrec}
\end{equation}
the residual form of the loop \eqref{eq:scmap} near $\bm{u}^\star$.
\end{lemma}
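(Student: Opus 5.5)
The recursion follows by substituting the split \eqref{eq:gsplit} into the loop \eqref{eq:scmap} and subtracting the fixed point. First, write $\bm{u}_{k+1}=g(\bm{u}_k)=\bm{s}(\bm{u}_k)+\bm{f}_k$. Second, insert the first-order form of the self-conditioning map, $\bm{s}(\bm{u}_k)=\bm{u}^\star+\bm{J}(\bm{u}_k-\bm{u}^\star)+o(\lVert\bm{a}_k\rVert)$. This form holds by the $C^1$ part of Assumption~\ref{ass:attractor} together with $\bm{s}(\bm{u}^\star)=\bm{u}^\star$, which follows from the construction of $\bm{s}$ after \eqref{eq:taylor}. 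Third, subtract $\bm{u}^\star$ from both sides and use $\bm{a}_k=\bm{u}_k-\bm{u}^\star$. This gives
\begin{equation*}
\bm{a}_{k+1}=\bm{J}\bm{a}_k+\bm{f}_k+o(\lVert\bm{a}_k\rVert),
\end{equation*}
and dropping the remainder "near the fixed point" yields \eqref{eq:residrec}.

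Two points need care. The first is that $\bm{f}_k$ must truly be independent of $\bm{u}_k$. Here we invoke the stated reduction that the latent trajectory $\{\bm{z}_k\}$ is exogenous. Then $\bm{f}_k=\overline{\bm{X}_\theta([\bm{z}_k,\bm{u}^\star],t_k)}-\bm{u}^\star$ depends only on $(\bm{z}_k,t_k)$, and any coupling of $\bm{u}_k$ into $\bm{z}_{k+1}$ is either ignored or absorbed into the remainder.

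The second point is that pooling commutes with the expansion. The fed-back input is $\bm{u}_k\bm{1}_L$, so differentiating the pooled output $\overline{\bm{X}_\theta}$ with respect to $\bm{u}$ at $\bm{u}^\star$ is exactly how $\bm{J}$ is defined. This means the first-order term of the pooled map is $\bm{J}\bm{a}_k$ with no further averaging error.

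The main obstacle is making precise what "near the fixed point" means. The equality \eqref{eq:residrec} is exact only for the linearized map. For the true map I would state it with the $o(\lVert\bm{a}_k\rVert)$ term, and argue that the remainder stays uniformly small along the trajectory. The tool for this is contraction: the spectral radius of $\bm{J}$ is $\mu_1<1$, so if $\lVert\bm{f}_k\rVert$ is small, a standard induction keeps $\lVert\bm{a}_k\rVert$ bounded by roughly $\sup_k\lVert\bm{f}_k\rVert/\rho$ inside the neighbourhood where the $C^1$ bound holds. Consistent with how the appendix labels its results, I would present the lemma as exact within the idealized model and note this remainder control as the justification.
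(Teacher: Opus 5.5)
Your proposal is correct and follows the paper's own proof. Both subtract $\bm{u}^\star$ from the split $g(\bm{u}_k)=\bm{s}(\bm{u}_k)+\bm{f}_k$, insert $\bm{s}(\bm{u}_k)=\bm{u}^\star+\bm{J}\bm{a}_k+o(\lVert\bm{a}_k\rVert)$ to get $\bm{a}_{k+1}=\bm{J}\bm{a}_k+\bm{f}_k+o(\lVert\bm{a}_k\rVert)$, and drop the remainder as exact for the idealized linear model. Your extra contraction-based control of the remainder goes beyond the paper, which simply discards the $o(\lVert\bm{a}_k\rVert)$ term; note that your bound $\lVert\bm{a}_k\rVert\lesssim\sup_k\lVert\bm{f}_k\rVert/\rho$ relies on the symmetric idealization (so that $\lVert\bm{J}\rVert=\mu_1$), because a spectral radius below $1$ alone does not give a norm contraction.
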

\begin{proof}
Subtract $\bm{u}^\star$ from the split \eqref{eq:gsplit} and use
$\bm{s}(\bm{u}_k)=\bm{u}^\star+\bm{J}\bm{a}_k+o(\lVert\bm{a}_k\rVert)$ from \eqref{eq:taylor}:
$\bm{a}_{k+1}=\bm{J}\bm{a}_k+\bm{f}_k+o(\lVert\bm{a}_k\rVert)$, which is \eqref{eq:residrec} once
the higher-order remainder is dropped (exact for the idealized linear model).
\end{proof}

\begin{lemma}[Per-step decomposition]\label{lem:decomp}
The one-step change splits exactly as
\begin{equation}
  \Delta\bm{u}_k\;=\;\beta_k\,\bm{v}_1\;+\;\bm{r}_k\;+\;\bm{f}_k ,
  \tag{\ref{eq:decomp}}
\end{equation}
the decomposition of \eqref{eq:decomp}, with repetition coefficient
$\beta_k=(\mu_1-1)\,a^{(1)}_k$ and remainder $\bm{r}_k=\sum_{i\ge2}(\mu_i-1)\,a^{(i)}_k\bm{v}_i$.
The drive $\bm{f}_k$ is orthogonal to $\bm{v}_1$ (off the repetition axis, App.~\ref{app:empirical}), and
the remainder collects the off-axis modes, which contract faster ($\mu_i\le\mu_1$ for $i\ge2$).
\end{lemma}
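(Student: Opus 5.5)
The plan is to start from Lemma~\ref{lem:linearize} and rewrite the one-step change in residual coordinates. Since $\bm{u}^\star$ is fixed, $\Delta\bm{u}_k=\bm{u}_{k+1}-\bm{u}_k=\bm{a}_{k+1}-\bm{a}_k$. Substituting \eqref{eq:residrec} gives
\begin{equation*}
  \Delta\bm{u}_k=(\bm{J}-\bm{I})\,\bm{a}_k+\bm{f}_k .
\end{equation*}
This step is exact in the idealized linear model, because the $o(\lVert\bm{a}_k\rVert)$ remainder was already dropped in Lemma~\ref{lem:linearize}. The drive term $\bm{f}_k$ therefore appears untouched, and everything else is the self-conditioning response acting on the residual.

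Next we use the symmetric idealization of Assumption~\ref{ass:attractor}. Write the spectral decomposition $\bm{J}=\sum_i\mu_i\bm{v}_i\bm{v}_i^\top$ over the orthonormal eigenbasis, and expand $\bm{a}_k=\sum_i a^{(i)}_k\bm{v}_i$ with $a^{(i)}_k=\langle\bm{a}_k,\bm{v}_i\rangle$. Then $(\bm{J}-\bm{I})\bm{a}_k=\sum_i(\mu_i-1)a^{(i)}_k\bm{v}_i$. We split off the $i=1$ term and call it $\beta_k\bm{v}_1$, with
\begin{equation*}
  \beta_k=(\mu_1-1)\,a^{(1)}_k=-\rho\,a^{(1)}_k .
\end{equation*}
The terms $i\ge2$ form $\bm{r}_k$. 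Orthonormality gives $\bm{r}_k\perp\bm{v}_1$ directly. This yields the displayed decomposition exactly, with the stated coefficients.

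The remaining claims are qualitative and follow from the eigenvalue ordering. Because $\mu_i\le\mu_1<1$ for $i\ge2$, each off-axis coordinate evolves, ignoring the drive, as $a^{(i)}_{k+1}=\mu_i a^{(i)}_k$. So $|a^{(i)}_k|$ decays at least as fast as the $\bm{v}_1$ coordinate, and the ratio is controlled by $(\mu_i/\mu_1)^k$. This is what "contract faster" means.

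The one point I expect to be delicate is the orthogonality of $\bm{f}_k$ to $\bm{v}_1$. It does not follow from the linear model: $\bm{f}_k$ is an exogenous zeroth-order term. I would state it explicitly as an empirical idealization, supported by the measured $|\cos(\bm{f}_k,\bm{v}_1)|\approx0.15$ from App.~\ref{app:empirical}, rather than derive it. I would also note that the decomposition itself holds without this assumption. If $f^{(1)}_k\neq0$, one can either absorb that small on-axis component into $\beta_k$ or keep it in $\bm{f}_k$. Keeping it is consistent with the driven offset in Lemma~\ref{lem:drivenfp}.
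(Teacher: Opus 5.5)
Your proposal is correct and takes essentially the same route as the paper. You write $\Delta\bm{u}_k=\bm{a}_{k+1}-\bm{a}_k=(\bm{J}-\bm{I})\bm{a}_k+\bm{f}_k$ via Lemma~\ref{lem:linearize}, expand in the orthonormal eigenbasis to split off the $i{=}1$ term as $\beta_k\bm{v}_1$, and read the faster off-axis contraction from the homogeneous scalar recursions $a^{(i)}_{k+1}=\mu_i a^{(i)}_k$. Your treatment of $\bm{f}_k\perp\bm{v}_1$ as an empirical idealization rather than a derived fact also matches the paper, whose proof likewise only cites the measured $\lvert\cos(\bm{f}_k,\bm{v}_1)\rvert{=}0.15$ and which later keeps $f^{(1)}$ in Prop.~\ref{prop:lambda}.
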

\begin{proof}
From Lemma~\ref{lem:linearize},
$\Delta\bm{u}_k=\bm{u}_{k+1}-\bm{u}_k=\bm{a}_{k+1}-\bm{a}_k=(\bm{J}-\bm{I})\bm{a}_k+\bm{f}_k$.
Expanding the residual in the orthonormal eigenbasis (Assumption~\ref{ass:attractor}),
$\bm{a}_k=\sum_i a^{(i)}_k\bm{v}_i$, gives
$(\bm{J}-\bm{I})\bm{a}_k=\sum_i(\mu_i-1)a^{(i)}_k\bm{v}_i$; separating the $i{=}1$ term as
$\beta_k\bm{v}_1$ from the rest as $\bm{r}_k$ yields the stated identity. Since
projecting \eqref{eq:residrec} on $\bm{v}_i$ gives the scalar recursion
$a^{(i)}_{k+1}=\mu_i a^{(i)}_k+f^{(i)}$, whose homogeneous transient decays as $\mu_i^k$, fastest for
the smallest $\mu_i$; so the off-axis transients ($\mu_i\le\mu_1$, $i\ge2$) die out at least as fast
as the $\bm{v}_1$ transient. They do not vanish: each settles at the driven offset $f^{(i)}/(1-\mu_i)$
(the off-axis content $\bm{w}$ of Prop.~\ref{prop:dmean}), largest along $\bm{v}_1$. So $\bm{v}_1$
carries both the slowest transient and the largest steady offset, the persistent structured part of
the residual (Lemma~\ref{lem:drivenfp}).
\end{proof}

\begin{lemma}[Where the driven loop settles]\label{lem:drivenfp}
With the drive frozen at $\bm{f}_k\equiv\bm{f}$, the recursion \eqref{eq:residrec} converges
geometrically (at rate $\mu_1$) to the unique fixed point $\bm{a}_\infty=(\bm{I}-\bm{J})^{-1}\bm{f}$,
i.e.
\begin{equation}
  \bm{u}_\infty=\bm{u}^\star+(\bm{I}-\bm{J})^{-1}\bm{f}.
  \tag{\ref{eq:driven-fp}}
\end{equation}
Its distance from $\bm{u}^\star$ along the repetition axis is
$\lvert a^{(1)}_\infty\rvert=\lvert f^{(1)}\rvert/(1-\mu_1)=\lvert f^{(1)}\rvert/\rho$ with
$f^{(1)}=\langle\bm{f},\bm{v}_1\rangle$. Since $\bm{f}$ is near-orthogonal to $\bm{v}_1$
($f^{(1)}$ small), $\bm{u}_\infty$ sits a small but nonzero distance from $\bm{u}^\star$: a
bounded repetition offset, not full collapse. This $\lvert f^{(1)}\rvert/\rho$ is a residual offset,
not an amplification: stronger contraction (larger $\rho$, smaller $\mu_1$) leaves the drive less
room to push the endpoint off $\bm{u}^\star$, shrinking the distance and deepening repetition,
whereas a near-marginal mode ($\rho\!\to\!0$) would let even a small $f^{(1)}$ hold $\bm{u}_\infty$
far from $\bm{u}^\star$.
\end{lemma}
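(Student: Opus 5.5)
The plan is to work entirely with the linear recursion of Lemma~\ref{lem:linearize} with the drive frozen, $\bm{a}_{k+1}=\bm{J}\bm{a}_k+\bm{f}$, and to use the symmetric idealization of Assumption~\ref{ass:attractor} to diagonalize it. I will proceed in four steps: existence and uniqueness of the fixed point, geometric convergence, the on-axis distance, and a short reading of that formula.

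\emph{Fixed point.} Every eigenvalue satisfies $0\le\mu_i\le\mu_1<1$, so $\bm{I}-\bm{J}$ has eigenvalues $1-\mu_i\ge\rho>0$ and is invertible. Therefore $\bm{a}=\bm{J}\bm{a}+\bm{f}$ has the unique solution $\bm{a}_\infty=(\bm{I}-\bm{J})^{-1}\bm{f}$. Adding back $\bm{u}^\star$ gives \eqref{eq:driven-fp}.

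\emph{Convergence.} Subtracting the fixed-point equation from the recursion gives the error recursion $\bm{e}_{k+1}=\bm{J}\bm{e}_k$, where $\bm{e}_k:=\bm{a}_k-\bm{a}_\infty$. Hence $\bm{e}_k=\bm{J}^k\bm{e}_0$. Because $\bm{J}$ is symmetric, its operator norm equals its spectral radius $\mu_1$, so $\lVert\bm{e}_k\rVert\le\mu_1^k\lVert\bm{e}_0\rVert$. This is geometric convergence at rate $\mu_1$.

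\emph{Coordinatewise, the formula for $\lvert a^{(1)}_\infty\rvert$.} I will expand in the eigenbasis. Projecting onto $\bm{v}_i$ gives the scalar recursion $a^{(i)}_{k+1}=\mu_i a^{(i)}_k+f^{(i)}$, as in the proof of Lemma~\ref{lem:decomp}. Its limit is $a^{(i)}_\infty=f^{(i)}/(1-\mu_i)$. Equivalently, $(\bm{I}-\bm{J})^{-1}$ acts as multiplication by $1/(1-\mu_i)$ on $\bm{v}_i$. Taking $i=1$ gives $\lvert a^{(1)}_\infty\rvert=\lvert f^{(1)}\rvert/\rho$.

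\emph{Interpretation.} The remaining sentences of the statement are readings of this formula rather than further claims to prove. The distance is small but nonzero when $f^{(1)}$ is small and nonzero, which invokes the near-orthogonality measured in App.~\ref{app:empirical}. It is monotone decreasing in $\rho$, and it diverges as $\rho\to0$ for fixed $f^{(1)}\neq0$.

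The main obstacle is not technical, since the linear algebra is routine. It lies in stating precisely which idealizations are invoked. Symmetry of $\bm{J}$ is what makes the operator norm equal $\mu_1$ and decouples the $\bm{v}_1$ coordinate; for a nonsymmetric contracting $\bm{J}$ one would only get convergence at any rate above the spectral radius, up to a constant. The frozen drive and the dropped $o(\lVert\bm{a}_k\rVert)$ remainder are also assumed. I would state the proof as exact for the idealized model of Lemma~\ref{lem:linearize}, and flag that the small-offset conclusion rests on the empirical smallness of $f^{(1)}$, not on the algebra.
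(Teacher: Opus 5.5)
Your proof is correct and follows the paper's route: invertibility of $\bm{I}-\bm{J}$ from its eigenvalues $1-\mu_i\ge\rho>0$, convergence of the frozen-drive recursion, and projection onto $\bm{v}_1$ to get $(1-\mu_1)a^{(1)}_\infty=f^{(1)}$. The only difference is in how you obtain the rate. You use the error recursion $\bm{e}_{k+1}=\bm{J}\bm{e}_k$ together with $\lVert\bm{J}\rVert=\mu_1$ for symmetric $\bm{J}$, whereas the paper unrolls the Neumann series and simply asserts that the transient decays as $\mu_1^k$; your version makes explicit the reliance on symmetry that the paper leaves implicit.
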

\begin{proof}
$\bm{J}$ is contracting, so its spectral radius $\mu_1<1$ and $\bm{I}-\bm{J}$ is invertible
(eigenvalues $1-\mu_i>0$); the Neumann series $\sum_{j\ge0}\bm{J}^j=(\bm{I}-\bm{J})^{-1}$
converges. Unrolling \eqref{eq:residrec} with $\bm{f}_k\equiv\bm{f}$ from $\bm{a}_0$,
\begin{equation}
  \bm{a}_k=\bm{J}^k\bm{a}_0+\sum_{j=0}^{k-1}\bm{J}^j\bm{f}\;\xrightarrow[k\to\infty]{}\;(\bm{I}-\bm{J})^{-1}\bm{f},
  \label{eq:unroll}
\end{equation}
the transient $\bm{J}^k\bm{a}_0$ vanishing at rate $\mu_1^k$. Equivalently $\bm{a}_\infty$
solves $(\bm{I}-\bm{J})\bm{a}_\infty=\bm{f}$; projecting on the unit eigenvector $\bm{v}_1$,
$(1-\mu_1)a^{(1)}_\infty=f^{(1)}$, so $\lvert a^{(1)}_\infty\rvert=\lvert f^{(1)}\rvert/(1-\mu_1)$.
The map $\rho\mapsto\lvert f^{(1)}\rvert/\rho$ is decreasing on $(0,1)$, so a larger $\rho$ shrinks
the residual.
\end{proof}

\paragraph{Nonlinear local existence and uniqueness.}
Lemma~\ref{lem:drivenfp} is exact for the linearized loop; the same conclusion survives for the
true nonlinear map by the implicit function theorem. Write the driven fixed-point condition as
$\bm{G}(\bm{u},\bm{f})=\bm{s}(\bm{u})-\bm{u}+\bm{f}=\bm{0}$; it is $C^1$ near
$(\bm{u}^\star,\bm{0})$ (Assumption~\ref{ass:attractor}) with $\bm{G}(\bm{u}^\star,\bm{0})=\bm{0}$,
and its $\bm{u}$-Jacobian there is $\partial_{\bm{u}}\bm{G}=\bm{J}-\bm{I}$, invertible whenever
$\mu_1\neq1$. The contraction $\mu_1<1$ supplies this (measured $\mu_1{\approx}0.15$,
App.~\ref{app:empirical}), so the theorem gives a neighborhood of $\bm{f}{=}\bm{0}$ on which the
driven fixed point $\bm{u}_\infty(\bm{f})$ exists, is locally unique, and is $C^1$ in $\bm{f}$,
with first-order expansion
$\bm{u}_\infty(\bm{f})=\bm{u}^\star+(\bm{I}-\bm{J})^{-1}\bm{f}+o(\lVert\bm{f}\rVert)$: the linear
formula~\eqref{eq:driven-fp} is its leading term. The theorem upgrades the linear solution to a
locally unique branch of the nonlinear loop; it does not by itself produce $\bm{u}^\star$, whose
existence is Assumption~\ref{ass:attractor} and which appears empirically as a near-fixed point
(residual ${\approx}0.07$; App.~\ref{app:empirical}).

\paragraph{Frozen coefficients.}
Lemma~\ref{lem:drivenfp} freezes $\bm{J}$ and $\bm{f}$ (the standard quasi-static
linearization); both vary slowly over a finite budget, so the real trajectory \emph{tracks}
this instantaneous attractor rather than reaching it.

\subsection{Consequence 1: a label-free direction recovers the axis}\label{app:dmean}

\begin{definition}[Difference-of-means direction and steering]\label{def:steer}
Let $a^{(1)}=\langle\bm{a},\bm{v}_1\rangle$ be the signed repetition-axis coordinate,
oriented so $+\bm{v}_1$ points toward the repetition fixed point $\bm{u}^\star$ and the rep score
is \emph{monotone increasing} in $a^{(1)}$ (deeper into the basin along $+\bm{v}_1$ is more
repetitive). The trapped and free groups $\mathcal{T},\mathcal{F}$ are the top and bottom
repetition tertiles, hence separated along signed $a^{(1)}$ with
$\bar a^{(1)}_{\mathcal{T}}>\bar a^{(1)}_{\mathcal{F}}$, so the difference-of-means gap
$\bar a^{(1)}_{\mathcal{T}}-\bar a^{(1)}_{\mathcal{F}}\neq0$. The estimator
\begin{equation}
  \bm{d}\;=\;\frac{\bm{m}_{\mathcal{T}}-\bm{m}_{\mathcal{F}}}{\lVert\bm{m}_{\mathcal{T}}-\bm{m}_{\mathcal{F}}\rVert},
  \qquad
  \bm{m}_{\mathcal{T}}=\mathbb{E}[\bm{u}\mid\mathcal{T}],\quad
  \bm{m}_{\mathcal{F}}=\mathbb{E}[\bm{u}\mid\mathcal{F}],
  \label{eq:dmean}
\end{equation}
defines the attractor direction, and steered self-conditioning feeds back
$\tilde{\bm{x}}_k=\hat{\bm{x}}_k-\lambda\,\bm{d}$ with strength $\lambda\ge0$.
\end{definition}

\begin{proposition}[Difference of means recovers the attractor direction]\label{prop:dmean}
Split the feedback into the repetition axis and the rest,
$\bm{u}=\bm{u}^\star+a^{(1)}\bm{v}_1+\bm{w}$, where the off-axis content $\bm{w}\perp\bm{v}_1$
collects the drive $\bm{f}_k$ and remainder $\bm{r}_k$ of Lemma~\ref{lem:decomp} (both
$\perp\bm{v}_1$). Assume \emph{separability}: the trapped/free split acts only through the on-axis coordinate
$a^{(1)}$, so the off-axis $\bm{w}$ has a group-independent mean,
$\E[\bm{w}\mid\mathcal{T}]=\E[\bm{w}\mid\mathcal{F}]$. Then the difference of means is parallel to the
repetition axis, $\bm{d}\parallel\bm{v}_1$ (empirically partial; see the remark).
\end{proposition}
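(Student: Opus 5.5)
The proof is a direct computation from the decomposition $\bm{u}=\bm{u}^\star+a^{(1)}\bm{v}_1+\bm{w}$, using linearity of conditional expectation. First, condition on each group. For $\mathcal{G}\in\{\mathcal{T},\mathcal{F}\}$ we get
\[
\bm{m}_{\mathcal{G}}=\E[\bm{u}\mid\mathcal{G}]=\bm{u}^\star+\bar a^{(1)}_{\mathcal{G}}\bm{v}_1+\E[\bm{w}\mid\mathcal{G}],
\]
where $\bar a^{(1)}_{\mathcal{G}}=\E[a^{(1)}\mid\mathcal{G}]$. Here $\bm{u}^\star$ and $\bm{v}_1$ are deterministic (Assumption~\ref{ass:attractor}), so they pass through the expectation unchanged. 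The only integrability requirement is that the pooled feedback has a finite conditional mean on each tertile. This is automatic for the finite empirical sample used in Algorithm~\ref{alg:steer}.

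Second, subtract the two means. The fixed point $\bm{u}^\star$ cancels, and the separability hypothesis $\E[\bm{w}\mid\mathcal{T}]=\E[\bm{w}\mid\mathcal{F}]$ cancels the off-axis content. This leaves
\[
\bm{m}_{\mathcal{T}}-\bm{m}_{\mathcal{F}}=\big(\bar a^{(1)}_{\mathcal{T}}-\bar a^{(1)}_{\mathcal{F}}\big)\bm{v}_1 .
\]
Here we use that $\bm{w}\perp\bm{v}_1$ pointwise. It follows that each conditional mean of $\bm{w}$ lies in $\bm{v}_1^\perp$, so no hidden on-axis component can survive the subtraction. This is a linear-subspace argument: expectation preserves membership in the closed subspace $\bm{v}_1^\perp$. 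The membership itself comes from Lemma~\ref{lem:decomp}, where the drive and the remainder $\bm{r}_k$ both lie off $\bm{v}_1$.

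Third, normalize. By Definition~\ref{def:steer}, the tertile split is monotone in the signed coordinate, so $\bar a^{(1)}_{\mathcal{T}}>\bar a^{(1)}_{\mathcal{F}}$. The scalar prefactor is therefore strictly positive. Its norm is $\lVert\bm{m}_{\mathcal{T}}-\bm{m}_{\mathcal{F}}\rVert=\big|\bar a^{(1)}_{\mathcal{T}}-\bar a^{(1)}_{\mathcal{F}}\big|\,\lVert\bm{v}_1\rVert$, with $\lVert\bm{v}_1\rVert=1$ by orthonormality. Hence $\bm{d}$ in \eqref{eq:dmean} is well defined and equals $+\bm{v}_1$. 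This gives $\bm{d}\parallel\bm{v}_1$, with the orientation pointing toward the basin.

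The main obstacle is not the algebra. It is justifying the strict gap, i.e.\ that the tertile means of $a^{(1)}$ actually differ, since otherwise $\bm{d}$ is undefined. I would take this directly from Definition~\ref{def:steer}, which builds monotonicity of the rep score in $a^{(1)}$ into the orientation convention. To make it rigorous, note that if rep is a strictly increasing function of $a^{(1)}$ plus group-independent noise, the top tertile stochastically dominates the bottom one, so the mean gap is positive. I would state explicitly that the conclusion is exact only under separability. Departures from it add the term $\E[\bm{w}\mid\mathcal{T}]-\E[\bm{w}\mid\mathcal{F}]\in\bm{v}_1^\perp$ to the difference, so $\cos(\bm{d},\bm{v}_1)$ equals the on-axis gap divided by the full norm. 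This accounts for the partial alignment ($0.55$) reported in \S\ref{sec:theory}.
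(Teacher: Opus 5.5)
Your proposal is correct and follows the paper's proof essentially line for line: take the group-conditional means, cancel the shared $\bm{u}^\star$ and (by separability) the off-axis means, then normalize using the nonzero on-axis gap $\bar a^{(1)}_{\mathcal{T}}-\bar a^{(1)}_{\mathcal{F}}$ that Definition~\ref{def:steer} supplies. Your extras are consistent refinements the paper does not need: fixing the sign as $+\bm{v}_1$ rather than ``up to sign,'' and writing $\cos(\bm{d},\bm{v}_1)$ as the on-axis gap over the full norm when separability fails.
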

\begin{proof}
Take the group-conditional mean under the model: for $\mathcal{G}\in\{\mathcal{T},\mathcal{F}\}$,
\[
  \bm{m}_{\mathcal{G}}=\E[\bm{u}\mid\mathcal{G}]
  =\bm{u}^\star+\bar a^{(1)}_{\mathcal{G}}\,\bm{v}_1+\E[\bm{w}\mid\mathcal{G}],
  \qquad \bar a^{(1)}_{\mathcal{G}}:=\E[a^{(1)}\mid\mathcal{G}],
\]
the group-mean on-axis coordinate. Subtracting the two groups,
\begin{equation}
  \bm{m}_{\mathcal{T}}-\bm{m}_{\mathcal{F}}
  =\underbrace{(\bm{u}^\star-\bm{u}^\star)}_{=\,\bm{0}}
  +\bigl(\bar a^{(1)}_{\mathcal{T}}-\bar a^{(1)}_{\mathcal{F}}\bigr)\bm{v}_1
  +\underbrace{\bigl(\E[\bm{w}\mid\mathcal{T}]-\E[\bm{w}\mid\mathcal{F}]\bigr)}_{=\,\bm{0}} :
  \label{eq:dmeanderiv}
\end{equation}
the fixed point $\bm{u}^\star$ is shared by both groups and cancels, and the off-axis term
$\E[\bm{w}\mid\mathcal{T}]-\E[\bm{w}\mid\mathcal{F}]$ vanishes by the modeling assumption. Only the
on-axis term survives, so
$\bm{m}_{\mathcal{T}}-\bm{m}_{\mathcal{F}}=(\bar a^{(1)}_{\mathcal{T}}-\bar a^{(1)}_{\mathcal{F}})\bm{v}_1\parallel\bm{v}_1$,
with scalar $\bar a^{(1)}_{\mathcal{T}}-\bar a^{(1)}_{\mathcal{F}}\neq0$ since the trapped group
sits deeper along $\bm{v}_1$ (Definition~\ref{def:steer}). Normalizing, $\bm{d}\parallel\bm{v}_1$
(up to sign).
\end{proof}

\paragraph{$\bm{d}$ is the empirical counterpart of $\bm{v}_1$.}
The measured alignment is $\lvert\cos(\bm{v}_1,\bm{d})\rvert{\le}0.55$, and $\bm{d}$ steers better
than the single-point $\bm{v}_1$ itself (Tab.~\ref{tab:destim}) because it averages the basin-entry
drift over the trajectory and many samples. We take Prop.~\ref{prop:dmean} as the idealized
motivation and $\bm{d}$ as the operational direction.

\subsection{Consequence 2: a closed-form steering window}\label{app:windowderiv}

\begin{proposition}[Steering threshold and usable window]\label{prop:lambda}
Apply Definition~\ref{def:steer} with $\bm{d}=\bm{v}_1$ (Prop.~\ref{prop:dmean}). Track the signed
axis coordinate $c_k=\langle\bm{u}_k-\bm{u}^\star,\bm{v}_1\rangle$ (the signed $a^{(1)}$ of
Definition~\ref{def:steer}) and the distance from the repetition fixed point $D_k=-c_k\ge0$
($+\bm{v}_1$ points toward $\bm{u}^\star$, so repetition is proximity to $\bm{u}^\star$ and the rep
score decreases in $D_k$). Projecting the steered recursion on $\bm{v}_1$ and \emph{retaining} the
drive component $f^{(1)}=\langle\bm{f},\bm{v}_1\rangle$ (proof) gives
\begin{equation}
  c_{k+1}=\mu_1\bigl(c_k-\lambda\bigr)+f^{(1)},
  \qquad\text{equivalently}\qquad
  D_{k+1}=\mu_1\bigl(D_k+\lambda\bigr)+\lvert f^{(1)}\rvert,
  \label{eq:steeredmap}
\end{equation}
with unique stable steady state
\begin{equation}
  D_\infty \;=\; \frac{1-\rho}{\rho}\,\lambda \;+\; \frac{\lvert f^{(1)}\rvert}{\rho} .
  \label{eq:ainf}
\end{equation}
The first term is the steering-induced outward drift; the second is the baseline offset of
Lemma~\ref{lem:drivenfp} ($D_\infty=\lvert f^{(1)}\rvert/\rho$ at $\lambda{=}0$), so steering adds to
a nonzero starting distance, not to zero. Let $a_{\mathrm{crit}}$ be the distance above which the
orbit is non-repetitive and $R$ the local manifold radius. In this steady-state scalar
approximation repetition is escaped once $\lambda\ge\lambda^\star$ and the decode stays on-manifold
while $\lambda\le\lambda_{\max}$, with
\begin{equation}
  \lambda^\star \;=\; \frac{\rho\,a_{\mathrm{crit}}-\lvert f^{(1)}\rvert}{1-\rho},
  \qquad
  \lambda_{\max} \;=\; \frac{\rho\,R-\lvert f^{(1)}\rvert}{1-\rho}.
  \label{eq:window}
\end{equation}
Both thresholds lie below their $f^{(1)}{=}0$ values by $\lvert f^{(1)}\rvert/(1-\rho)$ (the baseline
head start lowers the escape dose), while the usable window $[\lambda^\star,\lambda_{\max}]$ keeps
width $(R-a_{\mathrm{crit}})\,\rho/(1-\rho)$. We do not instantiate $a_{\mathrm{crit}}$, $R$, or
$f^{(1)}$ on ELF, so the window is qualitative; its numeric range is the empirical $\lambda$-sweep of
App.~\ref{app:window}.
\end{proposition}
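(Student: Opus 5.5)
The plan is to redo the computation of Lemma~\ref{lem:drivenfp} with the steered feedback in place of the raw one, and then read both thresholds off the resulting scalar steady state. Under Definition~\ref{def:steer} with $\bm{d}=\bm{v}_1$, the loop is fed $\tilde{\bm{u}}_k=\bm{u}_k-\lambda\bm{v}_1$ instead of $\bm{u}_k$. So the split \eqref{eq:gsplit} becomes $\bm{u}_{k+1}=\bm{s}(\bm{u}_k-\lambda\bm{v}_1)+\bm{f}$, with the drive frozen as in Lemma~\ref{lem:drivenfp}. Linearizing exactly as in Lemma~\ref{lem:linearize} gives the steered residual recursion $\bm{a}_{k+1}=\bm{J}(\bm{a}_k-\lambda\bm{v}_1)+\bm{f}$. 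I would then take the inner product with $\bm{v}_1$. The symmetric-$\bm{J}$ idealization of Assumption~\ref{ass:attractor} makes $\bm{v}_1$ a left eigenvector, so $\langle\bm{J}\bm{x},\bm{v}_1\rangle=\mu_1\langle\bm{x},\bm{v}_1\rangle$. This yields $c_{k+1}=\mu_1(c_k-\lambda)+f^{(1)}$, the first form of \eqref{eq:steeredmap}. Unlike Lemma~\ref{lem:decomp}, I keep $f^{(1)}$ rather than setting it to zero, since the drive is only near-orthogonal to $\bm{v}_1$.

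The step I expect to be the main obstacle is passing to $D_k=-c_k$ with $|f^{(1)}|$ in place of $-f^{(1)}$. Substituting gives $D_{k+1}=\mu_1(D_k+\lambda)-f^{(1)}$, which matches \eqref{eq:steeredmap} only if $f^{(1)}\le0$. I would justify this sign from the orientation convention of Definition~\ref{def:steer} together with Lemma~\ref{lem:drivenfp}. At $\lambda=0$ the unsteered endpoint is $c_\infty=f^{(1)}/\rho$. The statement requires $D\ge0$, i.e.\ that the driven endpoint lies on the non-basin side of $\bm{u}^\star$ along $+\bm{v}_1$, and this forces $f^{(1)}\le0$. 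I would state this explicitly as the sign condition implicit in ``$D_k\ge0$'' rather than derive it independently.

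Once \eqref{eq:steeredmap} is in hand, the rest is routine.
\begin{itemize}
\item \emph{Stability.} The map $D\mapsto\mu_1(D+\lambda)+|f^{(1)}|$ is affine with slope $\mu_1=1-\rho\in(0,1)$. So it is a contraction with a unique fixed point, and iterates converge geometrically at rate $\mu_1$, by the same unrolling as \eqref{eq:unroll}.
\item \emph{Steady state.} Solving $D=\mu_1(D+\lambda)+|f^{(1)}|$ gives $\rho D=(1-\rho)\lambda+|f^{(1)}|$, which is \eqref{eq:ainf}. At $\lambda=0$ it reduces to the offset of Lemma~\ref{lem:drivenfp}.
\item \emph{Monotonicity.} $D_\infty$ is strictly increasing and affine in $\lambda$, with slope $(1-\rho)/\rho>0$. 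Hence $\{\lambda:D_\infty\ge a_{\mathrm{crit}}\}$ and $\{\lambda:D_\infty\le R\}$ are half-lines.
\item \emph{Thresholds.} Inverting $D_\infty=a_{\mathrm{crit}}$ and $D_\infty=R$ gives $\lambda^\star$ and $\lambda_{\max}$ in \eqref{eq:window}. Each is shifted down from its $f^{(1)}=0$ value by $|f^{(1)}|/(1-\rho)$.
\item \emph{Window width.} Subtracting the two thresholds cancels that shift and leaves $(R-a_{\mathrm{crit}})\rho/(1-\rho)$. The window is nonempty exactly when $R\ge a_{\mathrm{crit}}$.
\end{itemize}

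Throughout, I would note that ``escaped'' and ``on-manifold'' are judged at the steady state in this scalar approximation, not along the transient, consistent with the quasi-static remark after Lemma~\ref{lem:drivenfp}.
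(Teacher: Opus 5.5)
Your proposal is correct and follows essentially the same route as the paper. You form the steered residual recursion $\bm{a}_{k+1}=\bm{J}(\bm{a}_k-\lambda\bm{v}_1)+\bm{f}$, project onto $\bm{v}_1$ using the symmetric-$\bm{J}$ idealization while retaining $f^{(1)}$, and negate to the distance form. You then get the unique stable steady state from the contraction $D_{k+1}-D_\infty=\mu_1(D_k-D_\infty)$ and solve the two linear inequalities for $\lambda$.

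On the sign condition $f^{(1)}\le 0$ that you flag: the paper does not derive it either, but asserts it as part of the orientation picture (``the drive points outward''). Your explicit link to the premise $D\ge 0$ at $\lambda=0$, via $c_\infty=f^{(1)}/\rho$, is if anything slightly more transparent than the original.
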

\begin{proof}
\emph{Steered coordinate.}
Steering feeds back $\hat{\bm{x}}_k-\lambda\bm{v}_1$, so $\bm{a}_{k+1}=\bm{J}(\bm{a}_k-\lambda\bm{v}_1)+\bm{f}_k$.
With $\bm{J}$ symmetric (Assumption~\ref{ass:attractor}, so
$\langle\bm{J}\bm{x},\bm{v}_1\rangle=\mu_1\langle\bm{x},\bm{v}_1\rangle$), projecting on $\bm{v}_1$ and
keeping the drive component $f^{(1)}=\langle\bm{f}_k,\bm{v}_1\rangle$ closes the scalar recursion
$c_{k+1}=\mu_1(c_k-\lambda)+f^{(1)}$ of \eqref{eq:steeredmap}, decoupled from the off-axis content.
(Lemma~\ref{lem:decomp} dropped $f^{(1)}$ as small; we retain it here so that $\lambda{=}0$ recovers
the baseline offset $c_\infty=f^{(1)}/\rho$ of Lemma~\ref{lem:drivenfp} rather than a collapse to
$\bm{u}^\star$.) Since $+\bm{v}_1$ points toward $\bm{u}^\star$ while $\bm{a}=\bm{u}-\bm{u}^\star$ is
the outward displacement, $c_k\le0$, and the drive points outward ($f^{(1)}\le0$); negating gives the
distance form $D_{k+1}=\mu_1(D_k+\lambda)+\lvert f^{(1)}\rvert$, an outward drift forced by $\bm{d}$,
not assumed.

\emph{Steady state.}
Setting $D_{k+1}=D_k=D_\infty$ in \eqref{eq:steeredmap} gives
$(1-\mu_1)\,D_\infty=\mu_1\lambda+\lvert f^{(1)}\rvert$, so
$D_\infty=(1-\rho)\lambda/\rho+\lvert f^{(1)}\rvert/\rho$ as in \eqref{eq:ainf}. Subtracting this
relation leaves $D_{k+1}-D_\infty=\mu_1(D_k-D_\infty)$, decaying as $\mu_1^{\,k}$ ($\mu_1<1$): the
steady state is unique and stable on this axis.

\emph{Window.}
$D_\infty$ grows linearly in $\lambda$ from the baseline $\lvert f^{(1)}\rvert/\rho$, clearing
$a_{\mathrm{crit}}$ once $\lambda\ge\lambda^\star$ and staying below $R$ while $\lambda\le\lambda_{\max}$:
\[
  D_\infty\ge a_{\mathrm{crit}}\iff\lambda\ge\lambda^\star,
  \qquad
  D_\infty\le R\iff\lambda\le\lambda_{\max},
\]
with $\lambda^\star,\lambda_{\max}$ as in \eqref{eq:window} (each inequality solved for $\lambda$).
\end{proof}

Equation~\eqref{eq:window} is the formal content of our empirical picture:
steering removes the pull into the basin without touching the off-$\bm{v}_1$
content $\bm{w}$ that blanket soft-SC attenuates indiscriminately
(\S\ref{sec:steer}); a finite window $[\lambda^\star,\lambda_{\max}]$ exists (Fig.~\ref{fig:window}),
with $\lambda{\approx}2$ the conservative knee (repetition is monotone-decreasing in $\lambda$
through $\lambda{\approx}4$, so $\lambda{\approx}2$ is the smallest comfortably-delivering
dose, not the rep-minimizer) and over-steering past the upper edge eventually spiking
non-words (by $\lambda{\approx}8$) as predicted by $\lambda_{\max}$ (Fig.~\ref{fig:window}).

\subsection{Empirical validation on the trained loop}\label{app:empirical}

\paragraph{Existence, contraction, and local uniqueness of the fixed point.}
We test the premises of Assumption~\ref{ass:attractor} and the implicit-function note of
App.~\ref{app:linear} directly on the trained loop. Freezing $(\bm{z}_k,t_k)$ at the formed basin
(fraction $0.85$) and iterating the full step $\bm{u}\!\leftarrow\!\bm{g}(\bm{u})$ from the captured
base point, the relative residual $\lVert\bm{g}(\bm{u})-\bm{u}\rVert/\lVert\bm{u}\rVert$ starts at
$0.07$ and settles near $0.06$, the floor set by the step's sampling noise: $\bm{g}$ has a
near-fixed point, and the captured base point already sits on it (pooled $\bm{u}^\star$ captured
vs.\ iterated, $\cos{=}1.00$). Power-iterating the Jacobian there gives a leading gain
$\mu_1{\approx}0.15<1$ (captured and strictly-iterated base points agree, $0.148$ vs.\ $0.149$):
the loop is contracting, so $\bm{I}-\bm{J}$ is invertible, the implicit-function premise
($\mu_1\neq1$) holds, and the driven fixed point is locally unique. The recovered axis is itself
stable across the two base points
($\lvert\cos(\bm{v}_1^{\mathrm{cap}},\bm{v}_1^{\mathrm{strict}})\rvert{=}0.999$, and $0.999$ to the
saved $\bm{v}_1$ used in Fig.~\ref{fig:mechanism}), the empirical face of local uniqueness.

\paragraph{A dominant contracting mode emerges as the basin forms.}
Estimating $\bm{J}$ by the matrix-free power iteration of Alg.~\ref{alg:vone} ($n{=}100$
trajectories, $30$ iterations) at base points of increasing convergence (trajectory fraction
$0.5\!\to\!0.85$), the predicted structure appears: the spectral gap $\mu_1/\mu_2$ rises from
${\approx}1$ to $2.7$, one mode separating, and the label-free $\bm{d}$ aligns with it
($\lvert\cos(\bm{v}_1,\bm{d})\rvert$ up to $0.55$, well above the random floor
$1/\sqrt{e}{=}0.044$), with a perturbation along $\bm{d}$ amplified $2.2\times$ over a random one
(Tab.~\ref{tab:jac}). So $\bm{d}$ aligns with the measured dominant mode of the loop, the empirical
counterpart of the idealized $\bm{d}\!\parallel\!\bm{v}_1$ (Prop.~\ref{prop:dmean}), with the
alignment only partial; the full estimator anatomy and data
efficiency are in App.~\ref{app:analysis}.

\paragraph{The drive is off the repetition axis (Lemma~\ref{lem:decomp}).}
The decomposition $\Delta\bm{u}_k=\beta_k\bm{v}_1+\bm{r}_k+\bm{f}_k$ resolves a seeming
contradiction. The observed drift is dominated by the denoising drive $\bm{f}_k$, so its leading
direction (top principal component of $\Delta\bm{u}$, noise-averaged and pooled over the late third
of steps, $n{=}200$ trajectories) is near-orthogonal to both the repetition eigenvector
($\lvert\cos\rvert{=}0.15$, as is the per-step drive itself,
$\lvert\cos(\bm{f}_k,\bm{v}_1)\rvert{=}0.15$) and the steering direction
($\lvert\cos\rvert{=}0.09$); but the Jacobian, a derivative, cancels the perturbation-independent
$\bm{f}_k$ and exposes the self-amplified mode, whose leading eigenvector is $\bm{d}$. Drift and
Jacobian thus agree rather than conflict, one reading the forcing $\bm{f}_k$ and the other the
self-amplified mode $\beta_k\bm{v}_1$; ACE acts along $\bm{d}\perp\bm{f}_k$, leaving the drive
intact (\S\ref{sec:steer}).
     % formal analysis
\section{Anatomy of \texorpdfstring{$\bm{d}$}{d} (extended)}
\label{app:analysis}

This appendix collects the evidence behind \S\ref{sec:steer}.

\paragraph{One direction across samplers.}
A single base-config $\bm{d}$ steers across steps, guidance, $\gamma$, ODE/SDE, noise
scales, seeds and model sizes at near-native repetition reduction. Re-estimating
$\bm{d}$ under each setting recovers nearly the same axis: cosine $0.82$--$0.96$ to
$\bm{d}_B$ over the knob and size re-estimates of Table~\ref{tab:crosssize}
($0.86$--$0.94$ across the step sweep alone; \S\ref{sec:gen}). Pooling the feedback
over six samplers ($1500$ trajectories) and re-estimating barely moves it:
$\cos(\bm{d}_{\mathrm{pooled}}, \bm{d}_{\mathrm{base}})=0.89$. The direction is thus a
property of the model, not the operating point; the optimal strength $\lambda$ (not
the direction) absorbs the configuration dependence.

%!TEX root=../main.tex
\begin{table}[t]
  \centering
  \caption{\textbf{One direction: $\bm{d}$ re-estimated per size or per knob recovers nearly the same axis,
    and the transferred ELF-B $\bm{d}_B$ steers near-natively.} $\lambda{=}2$; $64$ steps and canonical knobs
    unless noted; $n{=}500$ (operating-point row $n{=}1000$); seeds row: per-seed medians; -- $=$ too few accepted ($<20$) to score.}
  \label{tab:crosssize}
  \setlength{\tabcolsep}{4pt}
  \footnotesize
  \begin{tabular}{@{}l c c@{\hskip 16pt} c c@{\hskip 16pt} c@{\hskip 16pt} c c@{}}
    \toprule
    & \multicolumn{2}{c}{Baseline} & \multicolumn{2}{c}{Per-config $\bm{d}$} & & \multicolumn{2}{c}{Shared $\bm{d}_B$ transfer} \\
    \cmidrule(lr){2-3}\cmidrule(lr){4-5}\cmidrule(lr){7-8}
    Config & rep\,\% & clean-PPL & rep\,\% & clean-PPL & $\cos(\bm{d},\bm{d}_B)$ & rep\,\% & clean-PPL \\
    \midrule
    \multicolumn{8}{@{}l}{\emph{across sizes}} \\
    ELF-M ($342$M) & $2.17$ & $25.2$ & $0.54$ & $29.0$ & $0.88$ & $0.67$ & $26.1$ \\
    ELF-L ($652$M) & $1.58$ & $26.1$ & $0.41$ & $28.9$ & $0.85$ & $0.71$ & $28.5$ \\
    \midrule
    \multicolumn{8}{@{}l}{\emph{across knobs (ELF-B, $105$M)}} \\
    sampler ODE & $5.05$ & $36.1$ & $1.34$ & $37.8$ & $0.82$ & $1.51$ & $38.6$ \\
    churn $\gamma{=}1.5$ & $8.20$ & $25.1$ & $2.43$ & $25.6$ & $0.95$ & $2.48$ & $28.2$ \\
    guidance $w{=}5$ & $6.45$ & $27.3$ & $3.09$ & $30.8$ & $0.96$ & $2.31$ & $30.4$ \\
    guidance $w{=}10$ & $6.04$ & $26.5$ & $2.11$ & $28.5$ & $0.89$ & $1.80$ & $29.9$ \\
    noise $\sigma{=}1.5$ & $16.21$ & -- & $4.53$ & $25.6$ & $0.85$ & $3.61$ & $27.2$ \\
    \midrule
    \multicolumn{8}{@{}l}{\emph{operating point and seeds (ELF-B, $105$M)}} \\
    Operating point & $6.83$ & $27.7$ & $2.11$ & $30.6$ & $1.00$ & $2.11$ & $30.6$ \\
    $5$ seeds, median & $7.29$ & $27.9$ & $1.97$ & $30.0$ & $0.93$ & $2.11$ & $30.4$ \\
    \bottomrule
  \end{tabular}
\end{table}

\paragraph{Estimator robustness.}
Difference-of-means is insensitive to its design choices. Holding the feedback batch fixed and
varying only the split, the recovered direction is near-identical, cosine to the base $\bm{d}$ in
$[0.97,1.00]$ across the top-repetition tertile fraction ($10$--$50\%$) and across which trajectory
half is pooled, and near zero for the random or label-permuted controls (Tab.~\ref{tab:destim-robust}).
The sterner test re-estimates $\bm{d}$ from a \emph{fresh} feedback batch at each setting at the
$\gamma{=}1.0$ operating point against the frozen, deployed base $\bm{d}$, folding in sampling noise:
even then cosine stays $0.87$--$0.93$ across tertile fractions and $0.88$--$0.91$ across step windows,
with the recipe itself re-estimated this way landing at $0.92$, the finite-sample noise floor every
variant sits at. So $\bm{d}$ is not an artifact of a tuned split.

\begin{table}[t]
  \centering\small
  \caption{\textbf{The difference-of-means direction $\bm{d}$ is robustly recoverable.}
    $\cos$ to the reference $\bm{d}$ on a fixed feedback batch, varying only the split
    (top-repetition tertile fraction, trajectory step window); near zero for the two
    degenerate controls. Main-text direction comparison: Tab.~\ref{tab:destim}.}
  \label{tab:destim-robust}
  \begin{tabular}{l c}
    \toprule
    Estimation choice & $\cos$ to $\bm{d}$ \\
    \midrule
    Tertile $10\%$ & $0.97$ \\
    Tertile $20\%$ & $0.99$ \\
    Tertile $33\%$ (diff.\ seed) & $1.00$ \\
    Tertile $50\%$ & $0.99$ \\
    Window first half & $0.97$ \\
    Window second half & $0.99$ \\
    \midrule
    Random (control) & $0.01$ \\
    Permuted (control) & $0.20$ \\
    \bottomrule
  \end{tabular}
\end{table}

\paragraph{Dose form: fixed \texorpdfstring{$\lambda$}{lambda} vs.\ per-step projection.}
At the fixed direction $\bm{d}$, exact per-step projection (subtract the instantaneous
$\bm{d}$-component) \emph{under}-doses, because the loop re-amplifies the removed component before
the next step; a fixed $\lambda{=}2$ that pre-compensates that gain reduces repetition more
(Tab.~\ref{tab:destim-dose}). The gain is the loop's Jacobian amplification (Tab.~\ref{tab:jac}),
and the full $\lambda$ sweep and usable window are in App.~\ref{app:window}.

\begin{table}[t]
  \centering\small
  \caption{\textbf{A fixed dose $\lambda{=}2$ beats exact per-step projection.}
    Dose forms at the fixed difference-of-means $\bm{d}$, $\gamma{=}1.0$ (rep is the median);
    clean-PPL is a guardrail. Main-text direction comparison: Tab.~\ref{tab:destim}.}
  \label{tab:destim-dose}
  \begin{tabular}{l c c}
    \toprule
    Dose form (diff.-of-means $\bm{d}$) & rep\,\% & clean-PPL \\
    \midrule
    Baseline (unsteered) & $6.83$ & $27.7$ \\
    \textbf{Fixed $\lambda{=}2$ (ACE)} & $\mathbf{2.11}$ & $30.6$ \\
    \midrule
    Proj.\ $\beta{=}0.5$ & $6.33$ & $27.1$ \\
    Proj.\ $\beta{=}1.0$ & $5.95$ & $27.3$ \\
    Proj.\ $\beta{=}1.5$ & $5.82$ & $26.8$ \\
    \bottomrule
  \end{tabular}
\end{table}

\paragraph{How the estimators are computed (Tab.~\ref{tab:destim}).}
All steer at $\lambda{=}2$, and the supervised ones share one trapped/free label set (the
top/bottom repetition tertiles difference-of-means uses). \emph{Difference-of-means} ($\bm{d}$)
subtracts the two class means, $\bm{\mu}_T{-}\bm{\mu}_F$. \emph{Top-PC} is the leading principal
component of the mean-centered pooled feedback (unsupervised). \emph{Logistic} is the weight
vector of a logistic classifier fit to those labels. \emph{LDA} is the regularized Fisher
discriminant $\widehat{\bm{\Sigma}}_w^{-1}(\bm{\mu}_T{-}\bm{\mu}_F)$, with a trace-shrinkage of
the pooled within-class covariance $\widehat{\bm{\Sigma}}_w$. The Jacobian $\bm{v}_1$ is the
power-iteration mode (Alg.~\ref{alg:vone}); \emph{random} and \emph{permuted} are the
sign-scrambled and label-shuffled controls.

\paragraph{Supervision matters.}
$\bm{d}$ correlates with the feedback's leading principal component
($\lvert\cos(\bm{d},\text{PC}_1)\rvert=0.73$), but the difference-of-means is what
makes it reliable. The unsupervised PC$_1$ (sign-ambiguous) and the logistic discriminant
recover the axis less faithfully and steer worse, and the \emph{theory-optimal} Jacobian
eigenvector $\bm{v}_1$ (the direction Proposition~\ref{prop:dmean} says $\bm{d}$ should
equal) worst of all; only the regularized LDA matches it (Tab.~\ref{tab:destim}).
Difference-of-means wins not by being more aggressive but because it averages the
basin-entry drift over the whole trajectory and many samples, where a single-point
linearization or a logistic classifier overfits.

\paragraph{Placebo controls.}
A random direction at the same dose is the sharpest control: it raises repetition
rather than reducing it, since subtracting an uninformed vector is a pure off-manifold
perturbation. So steering is not ``any feedback perturbation helps'': only directions
aligned with the attractor axis reduce repetition, and the reduction grows with
alignment. On-manifold placebos agree: the same pipeline on shuffled labels, or on
random halves within one tertile class, also fails ($\cos\le0.19$). The one instructive
exception, a within-trapped split that happens to capture residual depth spread
($\cos 0.28$), recovers part of the effect: even among placebos the effect tracks
alignment.

\paragraph{The dominant mode of the feedback loop.}
A matrix-free power iteration on the self-conditioning Jacobian
$\bm{J}=\mathrm{D}\bm{s}(\bm{u}^\star)$ recovers its leading mode $\bm{v}_1$
(Alg.~\ref{alg:vone}): each $\bm{J}\bm{v}$ is a central finite difference (perturb the
fed-back estimate by $\pm\varepsilon\bm{v}$ uniformly over positions, take one
denoising step, pool the change; the drive cancels, leaving the self-conditioning
response), iterated from base points captured along the trajectory. The Jacobian is
averaged over $n{=}100$ trajectories and power-iterated $30$ steps, with $\bm{v}_1$
oriented so the feedback projection rises with repetition. Since the finite-difference
$\bm{J}$ is generally non-symmetric, the $\mu_i$ are operator gains (singular-value-like
norms), not necessarily eigenvalues unless $\bm{J}$ is symmetrized. The predicted structure
\emph{emerges} as the trajectory converges (Table~\ref{tab:jac}, traced in
Fig.~\ref{fig:mechanism}b): the spectral gap $\mu_1/\mu_2$ rises from
$1.0$ to $2.7$ as subdominant modes collapse, $\bm{d}$ aligns with the leading
eigenvector up to $\lvert\cos\rvert=0.55$ ($12\times$ the random baseline
$1/\sqrt{e}=0.044$), and a perturbation along $\bm{d}$ is amplified $2.2\times$ over a
random one. $\bm{d}$ therefore tracks the loop's measured dominant mode,
not merely a statistical correlate of repetition, though recovering the single-point
eigenvector directly steers worse than difference-of-means (Tab.~\ref{tab:destim}).
This is the same gain the loop re-applies to any component along $\bm{d}$ each pass,
i.e.\ the amplification factor the fixed dose pre-compensates in the $\lambda^\star$
check (\S\ref{sec:steer}). The alignment is strong but not unit, consistent with
$\bm{d}$ tracking the dominant mode of an \emph{evolving} Jacobian rather than a
single static operator.

\begin{algorithm}[t]
\caption{Attractor modes $\bm{v}_1,\bm{v}_2$: matrix-free deflated power iteration on the feedback Jacobian}
\label{alg:vone}
\begin{algorithmic}[1]
\Require one-step map $\bm{g}$ at trajectory fraction $f$ (the formed basin, $f{\approx}0.85$);
  iterations $N$; probe scale $\varepsilon$
\Ensure modes $\bm{v}_1,\bm{v}_2$ and gains $\mu_1,\mu_2$ (spectral gap $\mu_1/\mu_2$)
\State iterate the loop $\bm{g}$ forward (baseline run) and capture its feedback $\hat{\bm{x}}_k$ at step $k{=}\lfloor fT\rfloor$ (near-converged)
\State $\bm{u}^\star\gets \tfrac{1}{L}\sum_{l=1}^{L}\hat{\bm{x}}_k[l]$
  \Comment{captured from the run, not solved}
\State $\varepsilon\gets 0.01\,\operatorname{std}(\bm{u}^\star)$
\State $\bm{J}\bm{v}:=\big[\bm{g}(\bm{u}^\star{+}\varepsilon\bm{v})-\bm{g}(\bm{u}^\star{-}\varepsilon\bm{v})\big]/(2\varepsilon)$
  \Comment{drive $\bm{f}_k$ cancels}
\State $\bm{v}_1\sim\mathcal{N}(\bm{0},\bm{I})$, normalized
\For{$n=1,\dots,N$}
  \State $\bm{w}\gets\bm{J}\bm{v}_1$;\quad $\mu_1\gets\lVert\bm{w}\rVert$;\quad $\bm{v}_1\gets\bm{w}/\mu_1$
    \Comment{\citep{golub2013matrix}}
\EndFor
\State $\bm{v}_2\sim\mathcal{N}(\bm{0},\bm{I})$, orthogonalized against $\bm{v}_1$ and normalized
\For{$n=1,\dots,N$}
  \State $\bm{w}\gets\bm{J}\bm{v}_2-\langle\bm{J}\bm{v}_2,\bm{v}_1\rangle\bm{v}_1$;\quad
    $\mu_2\gets\lVert\bm{w}\rVert$;\quad $\bm{v}_2\gets\bm{w}/\mu_2$
    \Comment{deflated}
\EndFor
\State \Return $\bm{v}_1,\bm{v}_2,\mu_1,\mu_2$
\end{algorithmic}
\end{algorithm}

\begin{table}[t]
\centering
\small
\caption{\textbf{As the basin forms, a dominant mode emerges and $\bm{d}$ aligns
with it.} Jacobian power iteration ($\gamma{=}1.5$ runs); the gap near $1$ at fraction
$0.50$ reflects no clearly dominant mode yet (power iteration near-degenerate there, $\lvert\cos\rvert{=}0.37$).}
\label{tab:jac}
\begin{tabular}{lcccc}
\toprule
Traj.\ fraction & Mid-traj rep & Spectral gap $\mu_1/\mu_2$ & $\lvert\cos(\bm{v}_1,\bm{d})\rvert$ & Amp.\ $\bm{d}$/random \\
\midrule
0.50 & 0.093 & 1.02 & 0.37 & 1.16 \\
0.70 & 0.133 & 2.30 & \textbf{0.55} & 2.13 \\
0.85 & 0.141 & \textbf{2.67} & 0.52 & \textbf{2.21} \\
\bottomrule
\end{tabular}
\end{table}

\subsection{Is difference-of-means optimal? A directly-optimized direction}
\label{app:optd}
\begin{algorithm}[t]
\caption{Direct optimization of the steering direction (black-box, deployment objective)}
\label{alg:optd}
\begin{algorithmic}[1]
\Require model $f_\theta$, base direction $\bm{d}$, strength $\lambda{=}2$
\State collect $N{=}300$ baseline trajectories' mean feedback $\{\bm{s}_n\}$
\State $\text{PC}_{1..10} \gets$ top right-singular vectors of the centered $\{\bm{s}_n\}$
\State $B \gets \textsc{Orthonormalize}([\bm{d}, \text{PC}_1, \dots, \text{PC}_{10}])$
  \Comment{row $0$ $=$ $\bm{d}$; $10$ orthogonal escape directions}
\Function{Eval}{$\bm{v}, n, \text{seed}$}
  \State generate $n$ samples steered by $\lambda\bm{v}$
  \State \Return (rep median, Gen-PPL)
\EndFunction
\State $\bm{\theta}^\ast \gets \bm{e}_0$;\; $(r_0, p_0) \gets \textsc{Eval}(\bm{d}, 120)$
  \Comment{start exactly at $\bm{d}$}
\For{$t = 0,\dots,25$}
  \State $\bm{\theta} \gets \bm{\theta}^\ast + \mathcal{N}(0,\sigma_t^2 I)$,\quad
    $\sigma_t = 0.35\cdot0.93^{\,t}$ \Comment{annealed proposal}
  \State $(r, p) \gets \textsc{Eval}(\textsc{Norm}(\bm{\theta} B), 120, \text{seed}_t)$
  \State $s \gets r + 5\cdot\max(0,\, p - 1.1\,p_0)$
    \Comment{constrained; unconstrained: $s \gets r$}
  \State \textbf{if} $s < s^\ast$ \textbf{then} $\bm{\theta}^\ast \gets \bm{\theta}$
\EndFor
\State \Return $\textsc{Eval}(\textsc{Norm}(\bm{\theta}^\ast B), 600, \text{fresh seed})$
  vs $\textsc{Eval}(\bm{d}, 600, \text{fresh seed})$ \Comment{held-out comparison}
\end{algorithmic}
\end{algorithm}
\paragraph{Setup.}
The alternatives above are cheap heuristics (full sweep in Tab.~\ref{tab:destim}); we
also ask whether \emph{any} direction does better by optimizing the deployment
objective itself (Algorithm~\ref{alg:optd}). Steering leverage lives where the
feedback varies, so the search space is the span of $\bm{d}$ and the feedback's top
principal components, starting exactly at $\bm{d}$. The \emph{constrained} objective's
hinge penalty is a quality guard that fires only when a candidate degrades Gen-PPL
more than $10\%$ past the $\bm{d}$ reference; PPL is never minimized, since under our
thesis minimizing Gen-PPL would walk \emph{into} the attractor.

\paragraph{$\bm{d}$ is near-optimal.}
An extensive search (up to $30$ seeds across the two objectives) confirms direct
optimization lowers repetition only slightly further than $\bm{d}$, along essentially the same axis. On held-out seeds disjoint
from the search ($n{=}600$), the \emph{constrained} objective (minimize repetition subject to an
on-manifold/Gen-PPL constraint) edges $\bm{d}$ on repetition (median held-out
$1.85\%$ vs $2.11\%$, winning $24/30$ seeds) at matched Gen-PPL ($27.7$ vs $27.4$). The \emph{unconstrained} objective (minimize repetition alone, $15$
seeds) lowers it a touch more (median $1.84\%$, winning $13/15$ seeds) at slightly
worse Gen-PPL ($28.1$), while staying aligned with $\bm{d}$ (median $\cos{=}0.91$): it trades a little quality for a small gain along essentially the same axis. The basin
is thus approximately one-dimensional: a single cheap, label-free difference-of-means
estimate captures it nearly as well as a direct optimization costing ${\sim}8\times$
the generations ($3120$ vs $400$).

\begin{table}[t]
  \centering\small
  \caption{\textbf{Direct optimization improves on difference-of-means only slightly, along the
    same axis, at ${\sim}8\times$ the generation cost.} Black-box search seeded at $\bm{d}$,
    held-out seeds (ELF-B, $\lambda{=}2$); -- $=$ reference row itself.}
  \label{tab:optd}
  \begin{tabular}{l c c c c c}
    \toprule
    Direction (objective) & rep\,\%\,$\downarrow$ & Gen-PPL\,$\downarrow$ & Wins vs $\bm{d}$ & generations & $\cos$ to $\bm{d}$ \\
    \midrule
    \textbf{Difference-of-means $\bm{d}$ (ours)} & $2.11$ & $27.4$ & -- & $\mathbf{400}$ & $1.00$ \\
    Constrained (min rep s.t.\ PPL) & $1.85$ & $27.7$ & 24/30 & $3120$ & $0.92$ \\
    Unconstrained (min rep only, $15$ seeds) & $\mathbf{1.84}$ & $28.1$ & 13/15 & $3120$ & $0.91$ \\
    \bottomrule
  \end{tabular}
\end{table}

%!TEX root=../main.tex
\begin{table}[t]
  \centering\small
  \caption{\textbf{One direction: $\bm{d}$ is a single low-dimensional axis of the
    self-conditioning feedback.} Operating point (ELF-B, $64$ steps, $\gamma{=}1.0$),
    $n{=}1000$; its alignment with the Jacobian mode is Tab.~\ref{tab:jac}, its
    irreplaceability Tab.~\ref{tab:destim}.}
  \label{tab:daxis}
  \begin{tabular}{l c}
    \toprule
    Property of $\bm{d}$ at the operating point & Value \\
    \midrule
    Aligned with the feedback's leading PC, $\lvert\cos(\bm{d},\mathrm{PC}_1)\rvert$ & $0.73$ \\
    Energy of $\bm{d}$ in PC$_1$ / top-$5$ PCs & $0.53\,/\,0.82$ \\
    Estimate stable from few samples, $\cos(\bm{d}_n,\bm{d}_{\text{full}})$ at $n{=}50/200/800$ & $0.69\,/\,0.94\,/\,1.00$ \\
    \bottomrule
  \end{tabular}
\end{table}

\paragraph{Seed-robustness.}
The estimator comparison (Tab.~\ref{tab:destim}) is reported at the $\gamma{=}1.0$ operating point
(\S\ref{sec:steer}), where difference-of-means recovers the most steerable direction:
it beats the unsupervised top-PC, the Jacobian eigenvector
(Tab.~\ref{tab:jac}), and the logistic discriminant outright, and matches LDA. This is
seed-robust: across three estimation seeds difference-of-means stays at or near the
top (best in two, within $0.6$ points of LDA in the third), while the top-PC and the
Jacobian eigenvector remain far worse ($\ge\!5\%$ steered rep) in every seed.

\paragraph{Data efficiency.}
$\bm{d}$ is cheap to estimate: the cosine between $\bm{d}$ from $n$ trajectories and
the full-data direction reaches $0.69$ at $n{=}50$, $0.94$ at $n{=}200$, and $1.00$ at
$n{=}800$. A few hundred unlabeled trajectories suffice, stable well before the
repetition-tertile split matters.

\paragraph{Low-rank structure, one dominant self-amplified direction.}
A PCA of the per-sample mean feedback (mean-centered over the $n{=}1000$ samples; PC$_1$ is its
leading singular vector) concentrates the variance in a few components (PC$_1$ explains $18\%$, the
top five $46\%$). The contraction makes $\bm{v}_1$ a strong variance mode (propagating
an isotropic initial spread through $\bm{J}^k$ gives covariance
$\Cov(\bm{a}_k)=\sigma^2\bm{J}^{2k}=\sum_i\sigma^2\mu_i^{2k}\,\bm{v}_i\bm{v}_i^\top$,
peaked on $\bm{v}_1$), so $\bm{d}$ concentrates in the top components ($53\%$ of
$\lVert\bm{d}\rVert^2$ in PC$_1$, $82\%$ in the top five, $95\%$ in the top ten). But
it is \emph{subdominant}: the larger, sample-varying denoising variation pushes the
raw top PC off $\bm{v}_1$, so $\lvert\cos(\bm{d},\text{PC}_1)\rvert{=}0.73$ rather than
$1$ and a plain PCA cannot cleanly isolate the axis. $\bm{d}$ is instead the dominant
\emph{self-amplification} mode, the feedback Jacobian's leading eigenvector
(Tab.~\ref{tab:jac}), which is why the unsupervised PC$_1$ steers worse
(Tab.~\ref{tab:destim}).

\paragraph{When repetition emerges.}
Repetition \emph{forms during} the trajectory: how far a sample's feedback drifts
along $\bm{d}$ predicts its final repetition (Fig.~\ref{fig:mechanism}), the orbit
being drawn into the attractor (Assumption~\ref{ass:attractor}). A cheap
mid-trajectory check on the feedback can therefore flag doomed samples and reject or
reset them early, complementing the steering fix that removes the pull at its source.

\paragraph{What $\bm{d}$ encodes.}
A linear logit-lens read of $\bm{d}$ (projecting it through the output unembedding) ranks
fragmentary subword pieces (\textit{tri}, \textit{ction}, \textit{iding}, \textit{lect})
among its top-scoring tokens and whole content words (\textit{emotion},
\textit{scholarship}, \textit{stretched}) among its lowest: consistent with $\bm{d}$
encoding the repetitive, sub-lexical loop content that steering removes.
   % anatomy of d (+ cross-knob/size invariance table)
\section{Non-words and the steering window}
\label{sec:nonword}

\subsection{Non-words: a second defect, independent of repetition}

The \emph{non-word} rate counts out-of-dictionary tokens~\citep{kukich1992techniques}: length-$\ge4$
alphabetic tokens with \texttt{zipf\_frequency}$\,{=}\,0$ in
\texttt{wordfreq}~\citep{speer2018luminosoinsight} (a hunspell~\citep{nemeth2003hunspell} cross-check
preserves the ELF\,$>$\,human ordering). ELF generates them far above the human floor, and
Gen-PPL never sees them (Table~\ref{tab:nonword}).

\begin{table}[t]
  \centering\small
  \caption{\textbf{Non-words are a second, decode-axis defect, invisible to Gen-PPL.}
    Out-of-dictionary tokens; top block: main-text generations; bottom:
    conditional ELF-B checkpoints; $^{\dagger}$includes untranslated German; -- $=$ not applicable.}
  \label{tab:nonword}
  \begin{tabular}{l c c c}
    \toprule
    Generator & Params & non-word word\,\% & non-word sample\,\% \\
    \midrule
    Human (BBC) & -- & $0.20$ & $7.0$ \\
    \midrule
    ELF-L-owt & $652$M & $0.37$ & $83.1$ \\
    ELF-M-owt & $342$M & $0.39$ & $81.7$ \\
    ELF-B-owt & $105$M & $0.65$ & $89.9$ \\
    \midrule
    ELF-B-xsum & $105$M & $1.09$ & $18.5$ \\
    ELF-B-de-en & $105$M & $1.19^{\dagger}$ & $18.7$ \\
    \bottomrule
  \end{tabular}
\end{table}

This is a \emph{decode-axis} defect, statistically independent of the trajectory-axis
repetition: a sample can be repetitive, non-word-laden, both, or neither. The decoder reads
each latent position by an independent $\arg\max$ at $t{=}1$ with no autoregressive tie between
neighbours, so individually legal word-pieces assemble into an illegal whole (\emph{glued}
non-words, e.g.\ \texttt{shock}+\texttt{d}$\to$\texttt{shockd}). Because the two axes are
independent, the self-conditioning fix targets repetition and not non-words; conversely,
over-steering pushes the latent off the embedding manifold and surfaces as a non-word spike
(the $\lambda_{\max}$ bound, Prop.~\ref{prop:lambda}), the signal that bounds the steering
window from above.

\subsection{The steering \texorpdfstring{$\lambda$}{lambda}-window}
\label{app:window}

Two surface readouts bound the usable dose from opposite sides (Fig.~\ref{fig:window}).
\emph{Lower edge}: sweeping $\lambda$ from $0$, repetition falls steeply out of the basin from the
closed-form threshold $\lambda^\star{\approx}1.5$ (ELF-B crosses the human bar by $\lambda{\approx}2.5$,
the smaller models earlier). \emph{Upper edge}: past it the latent leaves the embedding
manifold and the non-word rate spikes ($0.83\%$ at $\lambda{=}6$ to $2.42\%$ at $\lambda{=}8$), and
repetition itself rebounds (back to $5.03\%$ at $\lambda{=}8$ from the $1.11\%$ floor at $\lambda{=}4$):
over-steering fails on both counts. The usable window is
$\lambda\in[1.5,5]$ and we operate at $\lambda{=}2$.

\begin{figure}[t]
  \centering
  \includegraphics[width=0.62\linewidth]{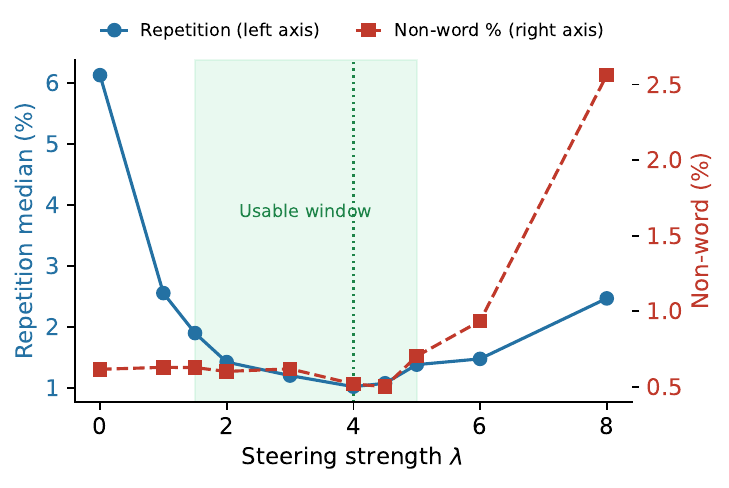}
  \caption{\textbf{Steering has a finite usable window, as the theory predicts.}
    $\lambda$-sweep on ELF-B: repetition (blue), non-words (red, marking the latent leaving the
    token-embedding manifold); shaded: repetition below the human bar, non-words at baseline.}
  \label{fig:window}
\end{figure}

A single ELF-B direction $\bm{d}_B$ at this one dose carries across sizes
(Table~\ref{tab:lambda}): it tracks each size's natively re-estimated direction throughout.
$\lambda{=}2$ is the knee of every size's curve and the conservative operating point: it brings
ELF-M and ELF-L well under the human bar and the hardest model ELF-B to just above it
($2.11\%$ vs the $1.92\%$ bar; $\lambda{=}3$ takes ELF-B fully under, at $1.37\%$), while
$\lambda{=}4$ over-steers the larger two. One direction and one mild dose thus suffice across sizes.

\begin{table}[t]
  \centering\small
  \setlength{\tabcolsep}{5pt}
  \caption{\textbf{Dose response across sizes: $\lambda{=}2$ (bold) is the conservative operating
    point.} Shared ELF-B $\bm{d}_B$ at dose $\lambda$ across sizes (transfer); quality $512$-word matched. Upper edge
    ($\lambda{\gtrsim}5$): Fig.~\ref{fig:window}.}
  \label{tab:lambda}
  \begin{tabular}{l l cc ccc}
    \toprule
    & & \multicolumn{2}{c}{\emph{defect}} & \multicolumn{3}{c}{\emph{quality}} \\
    \cmidrule(lr){3-4}\cmidrule(lr){5-7}
    Model & dose & rep\,\%\,$\downarrow$ & accept\,$\uparrow$ & c-PPL\,$\downarrow$ & dist-$1$\,$\uparrow$ & s-BLEU\,$\downarrow$ \\
    \midrule
    \multirow{5}{*}{ELF-B ($105$M)}& base & $6.83$ & $0.07$ & $27.7$ & $0.53$ & $0.09$ \\
    & $1$ & $3.32$ & $0.27$ & $28.8$ & $0.55$ & $0.10$ \\
    & $\mathbf{2}$ & $2.11$ & $0.45$ & $30.6$ & $0.53$ & $0.15$ \\
    & $3$ & $1.37$ & $0.67$ & $31.7$ & $0.54$ & $0.15$ \\
    & $4$ & $1.11$ & $0.77$ & $30.6$ & $0.52$ & $0.18$ \\
    \midrule
    \multirow{5}{*}{ELF-M ($342$M)}& base & $1.89$ & $0.51$ & $25.4$ & $0.55$ & $0.07$ \\
    & $1$ & $0.95$ & $0.78$ & $25.9$ & $0.56$ & $0.07$ \\
    & $\mathbf{2}$ & $0.67$ & $0.88$ & $26.1$ & $0.56$ & $0.10$ \\
    & $3$ & $0.55$ & $0.93$ & $27.0$ & $0.56$ & $0.11$ \\
    & $4$ & $0.52$ & $0.91$ & $29.6$ & $0.56$ & $0.11$ \\
    \midrule
    \multirow{5}{*}{ELF-L ($652$M)}& base & $1.66$ & $0.56$ & $26.2$ & $0.56$ & $0.07$ \\
    & $1$ & $0.98$ & $0.77$ & $27.4$ & $0.58$ & $0.08$ \\
    & $\mathbf{2}$ & $0.71$ & $0.84$ & $28.5$ & $0.58$ & $0.10$ \\
    & $3$ & $0.47$ & $0.89$ & $30.0$ & $0.59$ & $0.12$ \\
    & $4$ & $0.39$ & $0.92$ & $31.3$ & $0.59$ & $0.14$ \\
    \bottomrule
  \end{tabular}
\end{table}
        % non-words + steering window
\section{Steering Pareto-dominates every soft self-conditioning alternative}
\label{app:softsc}

\paragraph{The benchmarked alternatives (Tab.~\ref{tab:benchmark-full}).}
Each soft self-conditioning variant \emph{softens}, rather than removes, the fed-back estimate
$\hat{\bm{x}}$ before it re-enters the loop, with no retraining. \emph{mag} rescales it,
$\hat{\bm{x}}\!\gets\!\alpha\hat{\bm{x}}$, so $\alpha{=}1$ is the unmodified full-SC sampler and
$\alpha{=}0$ is SC-reset (feedback disabled; the $\alpha$ sweep is Tab.~\ref{tab:alpha}).
\emph{noise} adds Gaussian noise, $\hat{\bm{x}}\!\gets\!\hat{\bm{x}}+\sigma\bm{\epsilon}$, to
break the fixed point while keeping the signal. \emph{dist} periodically decodes $\hat{\bm{x}}$
to token logits and feeds back the temperature-weighted embedding
$\mathrm{softmax}(\text{logits}/T)\,\bm{E}$ (near-committed at small $T$, mean-like at large $T$).
\emph{cutoff} turns self-conditioning off entirely past a mid-trajectory step (feeding back $\bm{0}$
thereafter); \emph{decay} linearly anneals the feedback magnitude from full to a floor $\alpha$ over
the trajectory; \emph{early-restart} decodes $\hat{\bm{x}}$ once mid-trajectory, flags the samples
already repeating (token \texttt{seq-rep-4} above a threshold), and resets their feedback to $\bm{0}$
for the remaining steps.
Full-SC (post-hoc reject) runs the unmodified sampler followed by the same reject-to-$1000$ loop.
ACE instead subtracts one direction, $\hat{\bm{x}}\!\gets\!\hat{\bm{x}}-\lambda\bm{d}$
(\S\ref{sec:steer}), leaving the rest of the feedback intact. At the shared reject-to-$1000$
caliber, steering Pareto-dominates this frontier (\S\ref{sec:steer}; the full grid over every
variant and both doses is Tab.~\ref{tab:benchmark-full}); Fig.~\ref{fig:cost} gives the per-size
cost of the win.

\begin{table}[t]
  \centering\small
  \caption{\textbf{Full compute-to-clean grid: every soft method at its lower- and higher-rep dose.}
    Reject-to-$1000$ at $\gamma{=}1.0$, $64$ steps (pass \texttt{seq-rep-4}$\,\le1.92\%$); rep\,\% is the
    direct-generation median, NFE in $10^3$ forward passes, clean-PPL on the accepted set. The compact
    main-text view is Tab.~\ref{tab:benchmark}.}
  \label{tab:benchmark-full}
  \setlength{\tabcolsep}{4pt}
  \begin{tabular}{@{}l ccc @{\hskip 12pt} ccc@{}}
    \toprule
    & \multicolumn{3}{c}{lower-rep dose} & \multicolumn{3}{c}{higher-rep dose} \\
    \cmidrule(lr){2-4}\cmidrule(lr){5-7}
    Method & rep\,\% & NFE & clean-PPL & rep\,\% & NFE & clean-PPL \\
    \midrule
    Human (reference) & $0.00$ & -- & $14.4$ & \multicolumn{3}{c}{\emph{reference}} \\
    \textbf{ACE, $\lambda{=}2$ (ours)} & $2.11$ & $144$ & $30.3$ & \multicolumn{3}{c}{\emph{our fix}} \\
    \midrule
    hard SC (reset\,/\,Full-SC) & $0.00$ & $65$ & $97.4$ & $6.82$ & $1040$ & $27.6$ \\
    soft-SC mag ($\alpha{=}0.5/0.7$) & $1.94$ & $138$ & $34.5$ & $4.56$ & $416$ & $29.1$ \\
    soft-SC noise ($\sigma{=}0.4/0.2$) & $5.06$ & $630$ & $29.8$ & $6.15$ & $997$ & $27.8$ \\
    soft-SC dist ($T{=}1/2$) & $1.55$ & $120$ & $35.1$ & $1.52$ & $119$ & $34.9$ \\
    SC cutoff ($\tau{=}0.3/0.5$) & $0.00$ & $65$ & $91.5$ & $0.51$ & $75$ & $62.1$ \\
    SC decay ($1{\to}0\,/\,0.5$) & $1.96$ & $140$ & $38.5$ & $5.96$ & $593$ & $28.7$ \\
    SC early-restart ($\tau{=}0.5/0.3$) & $1.05$ & $97$ & $54.9$ & $3.56$ & $173$ & $74.3$ \\
    \bottomrule
  \end{tabular}
\end{table}

\begin{figure}[t]
  \centering
  \includegraphics[width=0.52\linewidth]{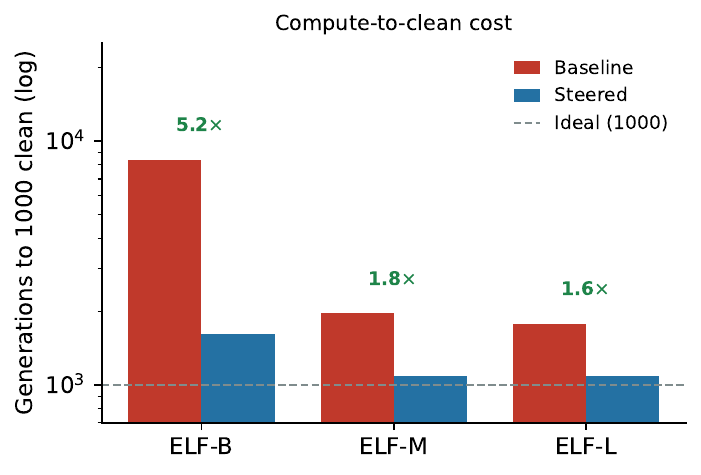}
  \caption{\textbf{Steering makes human-clean text $1.5$--$5\times$ cheaper.}
    Generations to $1000$ samples under the human bar, baseline vs steered, by model
    size (log scale); the per-method head-to-head at one size is Tab.~\ref{tab:benchmark}.}
  \label{fig:cost}
\end{figure}
   % soft-SC alternatives
\section{Building the fix into training (extended)}
\label{app:training}

\paragraph{Setup.}
We continue-train ELF-B from the released checkpoint on OpenWebText (fresh AdamW, weight
decay $0$, $50$-step warmup, effective batch $512$, learning rate $10^{-4}$, $128$ optimizer steps),
then evaluate the EMA weights by plain unsteered sampling on $n{=}1000$ samples (rep:
medians; clean-PPL: reject-to-$1000$), on the same footing as the released baseline. EMA decay is lowered from the
release's $0.9999$ to $0.999$ so the average tracks this short run. The continue-train control
($w{=}0$) and the anti-attractor regularizer ($w{=}7$, best) of Table~\ref{tab:training}
share this recipe.

\paragraph{Anti-attractor regularization.}
Rather than clean the feedback, teach the model not to \emph{produce} attractor-direction
feedback. Let $\hat{\bm{x}}_\theta(\bm{z}_k,t_k)\in\mathbb{R}^{L\times e}$ be the model's
clean-latent prediction at noise level $t_k$ and
$\bm{u}_\theta=\tfrac{1}{L}\sum_{l=1}^{L}\hat{\bm{x}}_\theta[l]\in\mathbb{R}^{e}$ its
position-pooled estimate (the feedback $\bm{u}$ of \S\ref{sec:theory}, \eqref{eq:scmap}). With the
frozen difference-of-means direction $\bm{d}$ (\eqref{eq:dmean}), the
\emph{anti-attractor} regularizer is
\begin{equation}
  \mathcal{L}_{\mathrm{attr}}(\theta)\;=\;
  \mathbb{E}_{\bm{z}_0,\,k}\!\left[\,\operatorname{ReLU}\!\big(\langle\bm{u}_\theta,\,\bm{d}\rangle\big)^{2}\,\right],
  \label{eq:attr}
\end{equation}
and we continue-train against
\begin{equation}
  \mathcal{L}(\theta)\;=\;\mathcal{L}_{\mathrm{base}}(\theta)\;+\;w\,\mathcal{L}_{\mathrm{attr}}(\theta),
  \qquad w\ge 0,
  \label{eq:trainobj}
\end{equation}
with $\mathcal{L}_{\mathrm{base}}$ the standard flow-matching loss. The
one-sided $\operatorname{ReLU}$ is deliberate: only the $+\bm{d}$ half-line is the repetitive
basin (Prop.~\ref{prop:lambda}), so we penalize the model for pushing its \emph{own} clean
estimate toward the attractor while leaving the $-\bm{d}$ side and all off-$\bm{d}$ content
$\bm{w}\perp\bm{d}$ untouched. Because the penalty acts only on $\hat{\bm{x}}_\theta$, the
trained model needs no test-time intervention.

\paragraph{Result.}
A short fine-tune (best at $w{=}7$) lowers plain repetition with coherent text at a small
clean-PPL cost (Table~\ref{tab:training}); the same budget without the penalty ($w{=}0$)
barely moves it, so the reduction is the regularizer's, not the extra training. This
confirms $\bm{d}$ is a trainable, causal property, not just an inference-time handle.
   % building the fix into training
\section{Controls and example outputs}
\label{app:setup}

This appendix supports the causal test of \S\ref{sec:exp} with the repetition--PPL
association (\S\ref{app:binned}), and collects example outputs: non-words
(\S\ref{app:examples}) and baseline-vs-steered repetition samples (\S\ref{app:samples}).

\subsection{Conditional PPL collapse ($64$-step operating point)}
\label{app:binned}

\paragraph{Association.} Table~\ref{tab:binned} stratifies the $1{,}000$ samples of the
$64$-step ($\gamma{=}1.0$) run by repetition; median GPT-2 PPL falls monotonically as
repetition rises (Pearson $r({\textsc{rep}},{\textsc{ppl}}){=}{-}0.63$, Spearman
$\rho{=}{-}0.68$).

\begin{table}[h]
  \centering
  \caption{\textbf{Perplexity falls monotonically as repetition rises (association).}
    $64$-step run.}
  \label{tab:binned}
  \begin{tabular}{l c c}
    \toprule
    rep range & $n$ & median Gen-PPL \\
    \midrule
    $0{-}0.5\%$ & $9$ & $32.8$ \\
    $0.5{-}2\%$ & $69$ & $27.1$ \\
    $2{-}5\%$ & $270$ & $23.6$ \\
    $5{-}10\%$ & $366$ & $19.8$ \\
    $10{-}20\%$ & $239$ & $15.7$ \\
    $>20\%$ & $47$ & $10.3$ \\
    \bottomrule
  \end{tabular}
\end{table}

\subsection{Example non-word outputs}
\label{app:examples}

All examples are from the ELF-B $64$-step ($\gamma{=}1.0$) run.

\paragraph{Non-words (with context).} The defect spans coinages, glued-together word
pairs, and misspellings:
\begin{itemize}\itemsep1pt
  \item \texttt{psychloginism}: ``\emph{The goal of \textbf{psychloginism} is to learn the\ldots}''
  \item \texttt{intersignal}: ``\emph{\ldots short of a bipartisan \textbf{intersignal} vote that would happen\ldots}''
  \item \texttt{evenfigured}: ``\emph{\ldots may occur and we've \textbf{evenfigured} out a way to\ldots}'' (even\,+\,figured)
  \item \texttt{slamed}: ``\emph{\ldots his wife when he \textbf{slamed} a woman in the\ldots}'' (slammed)
  \item \texttt{encryptted}: ``\emph{\ldots contains the first \textbf{encryptted} element\ldots}'' (encrypted)
\end{itemize}

\section{Qualitative samples: baseline vs.\ steered}
\label{app:samples}

\paragraph{Repetition emerging along the trajectory.} Decoding \emph{one} sample's current
estimate at successive trajectory fractions shows the basin forming, the text counterpart
of Figure~\ref{fig:process}. Table~\ref{tab:story} shows one draw under baseline and
steering at three trajectory stages: the baseline stays locked in the basin at $38$--$45\%$
repeated $4$-grams, while the steered run stays near the human bar ($1.0$--$3.1\%$).

%!TEX root=../main.tex
\newcommand{\rep}[1]{\colorbox{red!20}{#1}}
\newcommand{\elide}{\textcolor{gray}{[\dots]}}
\begin{table}[t]
\centering
\small
\setlength{\fboxsep}{1pt}
\caption{\textbf{The defect and the fix in one sample, along the trajectory.} Same seed as
Figure~\ref{fig:process}, baseline vs steered ($\lambda{=}2$); \colorbox{red!20}{highlighted}
$=$ the cell's dominant looped $4$-gram; \textcolor{gray}{[\dots]} $=$ elided.}
\label{tab:story}
\begin{tabular}{@{}l p{0.39\linewidth} p{0.33\linewidth} c@{}}
\toprule
Stage & \textbf{Baseline} (full-SC) & \textbf{Steered} ($\lambda{=}2$) & rep \\
\midrule
\textit{early} & When I spoke to my~\elide \rep{the definition of transgender} person. She said, “I don’t~\elide \rep{the definition of transgender} person’s life history, or does~\elide \rep{the definition of transgender} person’s life history? I mean,~\elide person’s life history. People asked~\elide person’s life history? I’s person’s life history? Interviewer –~\elide \elide {\scriptsize\textcolor{gray}{($\times8$ in 717~words)}} & In his speech to \rep{the U.N. General Council} addressing the why the crisis~\elide \rep{the U.N. general council} said. "The crisis is completely~\elide \rep{the U.N. General Council} and international officials have pressed~\elide {\scriptsize\textcolor{gray}{($\times3$ in 686~words)}} & 22.5\%\,$\to$\,1.6\% \tabularnewline[2pt]
\textit{mid} & When I got out to~\elide \rep{the definition of transgender} person. She said, “I don’t~\elide \rep{the definition of transgender} person’s life history right or~\elide \rep{the definition of transgender} person’s life history? I mean,~\elide person’s life history – people~\elide person’s life history? And what~\elide person’s life history? Interviewer –~\elide \elide {\scriptsize\textcolor{gray}{($\times8$ in 716~words)}} & In his speech to \rep{the U.N. General Assembly} on Thursday, on the crisis~\elide \rep{the U.N. General Assembly} said. "The crisis is completely~\elide \rep{the U.N. General Assembly} in Paris, where a unanimous~\elide and Palestinian officials have pushed~\elide {\scriptsize\textcolor{gray}{($\times4$ in 681~words)}} & 27.8\%\,$\to$\,1.9\% \tabularnewline[2pt]
\textit{final} & Once I got to know~\elide \rep{the definition of transgender} person. She said, “I don’t~\elide \rep{the definition of transgender} person’s life history correct or~\elide \rep{the definition of transgender} person’s life history? I mean,~\elide person’s life history. But people~\elide person’s life history? And what~\elide person’s life history? Interviewer –~\elide \elide {\scriptsize\textcolor{gray}{($\times8$ in 715~words)}} & In his address to \rep{the U.N. general assembly} on Thursday,addressing the crisis in~\elide \rep{the U.N. general assembly} said. "The crisis is completely~\elide \rep{the U.N. general assembly} in Paris, where a joint~\elide and Palestinian officials have pushed~\elide {\scriptsize\textcolor{gray}{($\times4$ in 672~words)}} & 26.8\%\,$\to$\,1.8\% \tabularnewline[2pt]
\bottomrule
\end{tabular}
\end{table}

   % controls and example outputs
\section{Related work}
\label{app:related}

\paragraph{Diffusion-LM failure modes: metric and decoder.} Prior work on diffusion-LM
failure modes mainly diagnoses evaluation artifacts or decoding bottlenecks, not text-visible
repetition directly. The \emph{metric} line shows generative
perplexity is distorted by low-entropy or reduced-temperature
sampling~\citep{zheng2025masked,pynadath2026generative}, echoing broader evidence that
model-based perplexity is unreliable for generation~\citep{wang2022perplexity,he2023blind}
and gameable end-to-end by naive samplers~\citep{franca2026hacking}; this line is mostly on
discrete or masked models. The \emph{decoder} line targets per-position rounding and
independence with stronger
decoders~\citep{li2022diffusion,dieleman2022continuous,li2026breaking,shen2026codar}: it addresses the
discretization bottleneck, not the self-conditioning feedback direction behind text-visible
repetition. An entropy view is in any case blind to whether the tokens are real words.

\paragraph{Reading the text, not the metric or the decoder.} We instead measure what the
models generate, counting the defects a reader sees against human references. The closest
prior model, the self-conditioned embedding diffusion of~\citet{strudel2022self}, reports
likelihood/entropy trade-offs under guidance; relative to it and to the AR-repetition
line~\citep{xu2022learning}, we study the generated text itself rather than a clean-embedding
probe or the metric.

\paragraph{Relation to autoregressive degeneration.} Repetition is the \emph{denoising-axis}
analogue of classical AR-decoding degeneration~\citep{holtzmancurious}: the
self-conditioning feedback plays the role of AR's repeated-context prior and the low-entropy
committed regime that of greedy maximization, but the errors compound along the denoising-step
axis rather than the sequence-order axis. The correspondence is prescriptive: because
repetition is set in the continuous latent, upstream of token selection, decode-time fixes
such as contrastive search~\citep{su2022contrastive} are poorly placed to reach it, whereas the orthogonal
non-word defect is amenable to AR-style contextual decoding (App.~\ref{sec:nonword}).

\paragraph{Relation to masked diffusion.} Two concurrent threads on masked diffusion bound our claims.
\citet{cardei2026simple} add self-conditioning to masked diffusion and report a large drop in
generative perplexity without an explicit repetition analysis; whether masked, non-committed feedback
enters the attractor we describe is left open. \citet{shnaidman2025activation} steer a
behavioral knob of masked DLMs (safety refusal) along an approximately one-dimensional
subspace. A single-direction account of a DLM is thus not unique to us; what is specific here
is a contractive attractor of the \emph{continuous} self-conditioning loop, recovered label-free
and targeting the generative-perplexity defect, captured by one frozen direction that generalizes
across inference settings and model scales.
    % related work

\end{document}